\documentclass{article}



\usepackage[nonatbib,preprint]{neurips_2022}
\usepackage[numbers]{natbib}



\usepackage[utf8]{inputenc} 
\usepackage[T1]{fontenc}    
\usepackage{hyperref}       
\usepackage{url}            
\usepackage{booktabs}       
\usepackage{amsfonts}       
\usepackage{nicefrac}       
\usepackage{microtype}      
\usepackage{xcolor}         
\usepackage[pdftex]{graphicx}
\usepackage{amssymb}
\usepackage{mathtools}
\usepackage{amsthm}
\usepackage{cleveref}
\usepackage{wrapfig}
\usepackage{colortbl, color}
\usepackage{tikz}

\theoremstyle{plain}
\newtheorem{theorem}{Theorem}[section]

\newtheorem{lemma}[theorem]{Lemma}
\newtheorem{corollary}[theorem]{Corollary}
\theoremstyle{definition}
\newtheorem{definition}[theorem]{Definition}

\theoremstyle{remark}
\newtheorem{remark}[theorem]{Remark}

\newcommand{\R}{\mathbb{R}}
\newcommand{\er}{\mathsf{Res}}
\newcommand{\ind}{\mathbf{1}}

\newcommand{\resembed}[1]{\mathbf{r}_{#1}}
\newcommand{\approxrembed}[1]{\mathbf{\widehat{r}}_{#1}}

\renewcommand{\vec}{\mathbf}

\definecolor{Yellow}{rgb}{1.0,0.93,0}
\definecolor{Orange}{rgb}{1.0,0.68,0.25}
\definecolor{White}{rgb}{1.0,1.0,1.0}

\title{Affinity-Aware Graph Networks}

%

\author{%
  Ameya Velingker\thanks{Equal contribution}\\
  Google Research\\
  \texttt{ameyav@google.com} \\
  \And
  Ali Kemal Sinop\footnotemark[1]\\
  Google Research\\
  \texttt{asinop@google.com}
  \AND
  Ira Ktena\\
  DeepMind\\
  \texttt{iraktena@deepmind.com}\\
  \And
  Petar Veli\v{c}kovi\'c\\
  DeepMind\\
  \texttt{petarv@deepmind.com}\\
  \And
  Sreenivas Gollapudi\\
  Google Research\\
  \texttt{sgollapu@google.com}
}

\begin{document}

\maketitle

\begin{abstract}
Graph Neural Networks (GNNs) have emerged as a powerful technique for learning on relational data. Owing to the relatively limited number of message passing steps they perform—and hence a smaller receptive field—there has been significant interest in improving their expressivity by incorporating structural aspects of the underlying graph. In this paper, we explore the use of affinity measures as features in graph neural networks, in particular measures arising from random walks, including effective resistance, hitting and commute times. We propose message passing networks based on these features and evaluate their performance on a variety of node and graph property prediction tasks. 
Our architecture has lower computational complexity, while our features are invariant to the permutations of the underlying graph. The measures we compute allow the network to exploit the connectivity properties of the graph, thereby allowing us to outperform relevant benchmarks for a wide variety of tasks, often with significantly fewer message passing steps. 
On one of the largest publicly available graph regression datasets, OGB-LSC-PCQM4Mv1, we obtain the best known single-model validation MAE at the time of writing.
\end{abstract}

\section{Introduction}
Graph Neural Networks (GNNs) constitute a powerful tool for learning meaningful representations in non-Euclidean domains. GNN models have achieved significant successes in a wide variety of node prediction~\cite{hamilton2017inductive, luan2019break}, link prediction~\cite{zhang2018link, you2019position}, and graph prediction~\cite{duvenaud2015convolutional, ying2019gnnexplainer} tasks. These tasks naturally emerge in a wide range of applications, including autonomous driving~\cite{chen2019pct}, neuroimaging~\cite{parisot2018disease}, combinatorial optimization~\cite{gasse2019exact, nair2020solving}, and recommender systems~\cite{ying2018graph}, while they have enabled significant scientific advances in the fields of biomedicine~\cite{wang2021scgnn}, structural biology~\cite{jumper2021highly}, molecular chemistry~\cite{stokes2020deep} and physics~\cite{bapst2020unveiling}.

Despite the predictive power of GNNs, it is known that the expressive power of standard GNNs is limited by the 1-Weisfeiler-Lehman (1-WL) test~\cite{xu2018powerful}. Intuitively, GNNs possess the same power in terms of distinguishing between non-isomorphic (sub-)graphs, while having the added benefit of adapting to the given data distribution. For some architectures, two nodes with different local structures have the same computational graph, thus thwarting distinguishability in a standard GNN. Even though some attempts have been made to address this limitation with higher-order GNNs~\cite{morris2019weisfeiler}, most traditional GNN architectures fail to distinguish between such nodes.

\textbf{Contributions}: We propose the use of \emph{affinity metrics} as features in a graph neural network to circumvent this limitation. Specifically, we consider statistics that arise from random walks in graphs, such as \emph{hitting time} and \emph{commute time} between pairs of vertices. We present a means of incorporating these statistics as scalar edge features in a message passing neural network (MPNN)~\cite{gilmer2017neural}. Additionally, we present a set of vector-valued \emph{resistive embeddings} that can be incorporated as node or edge feature vectors in the network. We show that such embeddings can be efficiently approximated, even for larger graphs, using sketching and dimensionality reduction techniques.

Moreover, we evaluate our networks on a number of benchmark datasets of diverse scales. First, we show that our networks outperform other baselines on the PNA dataset~\cite{corso2020principal}, which includes 6 node and graph algorithmic tasks, showing the ability of affinity measures to exploit structural properties of graphs. We also evaluate the performance on a number of graph and node tasks for datasets in the Open Graph Benchmark (OGB) collection~\cite{hu2020open}, including molecular and citation graphs. In particular, our networks with scalar effective resistance edge features achieve the state of the art on the OGB-LSC PCQM4Mv1 dataset, which was featured in a KDD Cup 2021 competition for large scale graph representation learning.

\section{Related Work}
Our work builds upon a wealth of graph theoretical and graph representation learning works, while we focus on a supervised, inductive setting. 

Even though GNN architectures were originally classified as spectral or spatial, we abstain from this division as recent research has demonstrated some equivalence of the graph convolution process regardless of the choice of convolution kernels~\citep{balcilar2021analyzing, bronstein2021geometric}. Spectrally-motivated methods require the eigendecomposition of the graph Laplacian matrix (or an approximation thereof) and, hence, corresponding convolutions capture different frequencies of the graph signal. Early works in this space include ChebNet~\cite{defferrard2016convolutional} and the more efficient kernel reparametrisation by Kipf et al.~\citep{kipf2017semi} to which we compare our work. Levie et al.~\citep{levie2018cayleynets} proposed CayleyNets, an alternative polynomial approximation of the Laplacian eigendecomposition.

Message passing neural networks (MPNNs)~\cite{gilmer2017neural} perform a transformation of node and edge representations before and after an arbitrary aggregator (e.g. \textit{sum}). Graph attention networks (GATs)~\cite{velickovic2018graph} aimed to improve the expressivity of GNNs by allowing graph nodes to ``attend'' differently to different edges inspired by the success of transformers in NLP tasks. One of the most relevant works was proposed by Beaini et al.~\citep{beaini2021directional}, i.e. directional graph networks (DGN). DGN uses the gradients of the low-frequency eigenvectors of the graph Laplacian, which are known to capture key information about the global structure of the graph and prove that the aggregators they construct using these gradients lead to more discriminative models than standard GNNs according to the 1-WL test. Prior work~\cite{morris2019weisfeiler} used higher-order ($k$-dimensional) GNNs, based on $k$-WL, and a hierarchical variant and proved theoretically and experimentally the improved expressivity in comparison to other models.

Other notable works include Position-aware Graph Neural Networks~\cite{you2019position} that capture positions/locations of nodes
with respect to a set of anchor nodes, 
Distance Encoding Networks~\cite{LiWWL20} that use the first few powers of the normalized adjacency matrix
as node features,
and Graph Isomoprhism Networks (GINs)~\cite{xu2018powerful}. Hamilton et al.~\citep{hamilton2017inductive} proposed a method to constuct node representations by sampling a fixed-size neighborhood of each node, and then performing a specific aggregator over
it, which led to impressive performance on large-scale inductive benchmarks. Bouritsas et al.~\citep{bouritsas2020improving} use topologically-aware message passing to detect and count graph substructures, while Bornar et al.~\citep{bodnar2021weisfeiler} propose a message-passing procedure on cell complexes motivated by a novel colour refinement algorithm to test their isomorphism which prove to be powerful for molecular benchmarks. 


\section{Affinity Measures and GNNs}
Our goal is to incorporate affinity measures related to random walk metrics on graphs 
into the GNN architecture.

\subsection{Random Walks, Hitting and Commute Times}
We define several natural properties of a graph that arise from a random walk. A random walk on $G$ starting from a node $u$ is a Markov chain on the vertex set $V$ such that the initial vertex is $u$, and at each time step, one moves from the current vertex to a neighbor, chosen with probability proportional to the weight of outgoing edges. We will use 
$\pi$ to denote the stationary distribution of this Markov Chain. For
random walks on weighted, undirected graphs, we know that $\pi_u = \frac{d_u}{2 M}$, where $d_u$ is the weighted
degree of node $u$, and $M$ is the sum of edge weights.

The \emph{hitting time} $H_{uv}$ fom $u$ to $v$ is defined as the expected number of steps for a random walk starting at $u$ to hit $v$.
We can also define the \emph{commute time} between $u$ and $v$ as $K_{uv}
= H_{uv} + H_{vu}$, the expected round-trip time for a random walk starting at $u$ to reach $v$ and then return to $u$.

\subsection{Effective Resistance}
A closely related quantity is the measure of \emph{effective resistances} in 
undirected graphs. This quantity corresponds to the effective resistance
if the whole graph was replaced with a circuit where each edge becomes
a resistor with resistance equal to the reciprocal of its weight. 
We will use $\er(u,v)$ to denote the effective resistance between
nodes $u$ and $v$. For undirected graphs, it is known that~\cite{lovasz93}
the effective resistance is proportional to the commute time, 
\(\er(u,v) = \frac{1}{2 M} K_{uv}. 
\)
%

In light of the above, our broad goal is to incorporate effective resistances and hitting times as edge features in an MPNN, as we describe in Section~\ref{sec:featmpnn}.

\subsection{Resistive Embeddings}
Effective resistances allow us to define the \emph{resistive embedding}, a mapping that associates each node $v$ of a graph $G = (V,E,W)$, where $W$ are the non-negative edge weights, with an embedding vector. Before we specify the resistive embedding, we define a few terms. Let $L = D-A$ be the graph Laplacian of $G$, where $D \in \R^{n\times n}$ is the diagonal matrix containing the weighted degree of each node and $A\in \R^{n\times n}$ is the adjacency matrix, whose $(i,j)^{th}$ entry is equal to the edge weight between $i$ and $j$, if exists; and $0$ otherwise. Let $B$ be the $m\times n$ edge-node incidence matrix, where $|V| = n$ and $|E| = m$, defined as follows: The $i$-th row of $B$ corresponds to the $i$-th edge $e_i=(u_i, v_i)$ of $G$ and has a $+1$ in the $u_i$-th column and a $-1$ in the $v_i$-th column, while all other entries are zero. Finally we will use
$C \in \R^{m\times m}$ to denote the conductance matrix, which is a diagonal matrix with $C_{ii}$ being the weight of $i^{th}$ edge.
It is easy to verify that $B^T C B = L$. 
Even though $L$ is not invertible, its null-space consists of
the indicator vectors for every connected component of $G$. For example,
if $G$ is connected, then $L$'s nullspace consists only of the multiples of all-$1$'s vector
\footnote{Throughout this section,
we will assume that our graph is connected. However everything applies to disconnected graphs, too.}. Hence, for any vector $x$ orthogonal to all-$1$'s, $L \cdot L^{\dagger} x = x$, where $L^{\dagger}$ is the pseudo-inverse.

We can express effective resistance between
any pair of nodes using Laplacian matrices~\cite{lovasz93} as
\(
\er(u,v) = (\ind_u-\ind_v)^T L^{\dagger} (\ind_u-\ind_v), 
\) where $\ind_v$ is an $n$-dimensional vector specifying the indicator for node $v$.
We are now ready to define the resistive embedding.
\begin{definition}
\label{eq:resembed} (Effective Resistance Embedding)
   \(\resembed{v} = C^{1/2} B L^{\dagger} \ind_v. \)
\end{definition}
The useful property for us is that the effective resistance between two nodes in the graph can be obtained easily from the distance
between their corresponding embedding. 
\begin{lemma}\label{lem:embeddist}
 For any pair of nodes $u,v$, we have $\|\resembed{u} - \resembed{v}\|_2^2 = \er(u,v)$.
\end{lemma}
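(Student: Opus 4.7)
The plan is to expand the squared norm, use the factorization $L = B^{T} C B$ to collapse the inner product to a pseudo-inverse expression, and then invoke the formula $\er(u,v) = (\ind_u-\ind_v)^{T} L^{\dagger}(\ind_u-\ind_v)$ stated earlier in the excerpt.

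More concretely, the first step is to note that $\resembed{u} - \resembed{v} = C^{1/2} B L^{\dagger} (\ind_u - \ind_v)$ by linearity of the map $\mathbf{x} \mapsto C^{1/2} B L^{\dagger} \mathbf{x}$. Taking squared Euclidean norms and writing $\|\mathbf{y}\|_2^{2} = \mathbf{y}^{T}\mathbf{y}$ gives
\begin{equation*}
    \|\resembed{u} - \resembed{v}\|_2^{2} = (\ind_u - \ind_v)^{T} L^{\dagger} B^{T} C^{1/2} \cdot C^{1/2} B L^{\dagger} (\ind_u - \ind_v) = (\ind_u - \ind_v)^{T} L^{\dagger} L L^{\dagger} (\ind_u - \ind_v),
\end{equation*}
using $C = C^{1/2} C^{1/2}$ (valid since $C$ is diagonal with non-negative entries) and then $B^{T} C B = L$ as observed in the text. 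Here I am implicitly using that $L^{\dagger}$ is symmetric, which holds because $L$ is symmetric.

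The remaining step is to show this quantity equals $(\ind_u - \ind_v)^{T} L^{\dagger} (\ind_u - \ind_v)$. This is where the nullspace remark from the excerpt comes in: under the standing assumption that $G$ is connected, the nullspace of $L$ is spanned by the all-ones vector, and $\ind_u - \ind_v$ is orthogonal to all-ones since its entries sum to zero. Hence $\ind_u - \ind_v$ lies in the row/column space of $L$, and by the defining property of the pseudo-inverse we have $L L^{\dagger}(\ind_u - \ind_v) = \ind_u - \ind_v$. Substituting this back yields $(\ind_u - \ind_v)^{T} L^{\dagger}(\ind_u - \ind_v) = \er(u,v)$, which finishes the proof.

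I do not expect a real obstacle here. The only subtlety worth stating carefully is the pseudo-inverse identity $L L^{\dagger} \mathbf{x} = \mathbf{x}$ for $\mathbf{x}$ orthogonal to $\ker L$; for a disconnected graph one would replace the all-ones vector with the span of connected-component indicators, but since $\ind_u - \ind_v$ is still orthogonal to this subspace whenever $u,v$ lie in the same component (and the effective resistance is infinite otherwise), the same argument goes through.
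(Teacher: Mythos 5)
Your proof is correct and follows essentially the same route as the paper's: expand the squared norm, use $B^T C B = L$, and collapse $L^{\dagger} L L^{\dagger}$ to $L^{\dagger}$ (the paper invokes the pseudo-inverse identity directly, while you justify the same step via orthogonality of $\ind_u - \ind_v$ to the nullspace, a harmless variation). No gaps.
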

The proof of this lemma can be found in \Cref{apx:proofs}.
%
%
One can easily check that any rotation of $\resembed{}$ also satisfies 
\Cref{lem:embeddist}, since rotations preserve Euclidean distances. 
In particular, if $U$ is an orthonormal matrix, then
$U \resembed{}$ is also a valid resistive embedding. 
In other words, 
resistive embeddings are unique with respect to rotations. 
This poses a challenge if we want to use the resistive embeddings as node or edge features: 
we need a way to enforce that a (G)NN using them will do so in a way that is invariant or equivariant to any rotations of the embeddings.
In our current work, we rely on data augmentation: at every training iteration, we apply random rotations to the input ER embeddings.
\begin{remark}
While data augmentation is a popular approach for promoting invariant and equivariant predictions, it is only \emph{hinting} to the network that such predictions are favourable. It is also possible, in the spirit of the geometric deep learning blueprint \citep{bronstein2021geometric}, to combine ER embeddings with an $O(n)$-equivariant GNN, which rigorously enforces rotational equivariance. A popular approach to building equivariant GNNs has been proposed by ~\citep{satorras2021n}, though it focuses on the full Euclidean group $E(n)$ rather than $O(n)$. We leave this exploration to future work.
\end{remark}

\begin{definition}
\label{def:mean-er}
 Let $\mathbf{p} := \sum_u \pi_u \resembed{u}$ be the mean of effective resistance embedding. 
\end{definition}
We might view $\mathbf{p}$ as a ``weighted mean''\footnote{Note that the average of
all $\resembed{u}$'s will be $0$. If the graph is regular, then $\mathbf{p}$ will also be $0$.} of $\resembed{}$.
We will define
the hitting time radius, $H_{\max}$, of a given graph as the 
maximum hitting time between any two nodes:
\begin{definition}[Hitting Time Radius]
\label{def:eff-res-rad}
$H_{\max} := \max_{u,v} H_{u,v}$.
\end{definition}
We will need the following to bound the hitting times
we computed:
\begin{lemma}
\label{thm:dist-to-kmax}
For any node $u$, 
$\|\resembed{u} - \mathbf{p}\|^2 \le \frac{H_{\max}}{M}$.
\end{lemma}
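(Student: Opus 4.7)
The plan is to reduce the squared distance from $\resembed{u}$ to the weighted mean $\mathbf{p}$ to a $\pi$-weighted average of pairwise squared embedding distances, and then convert each such squared distance into an effective resistance (via \Cref{lem:embeddist}) and finally into hitting times.

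First, since $\sum_v \pi_v = 1$, we can write $\resembed{u} - \mathbf{p} = \sum_v \pi_v(\resembed{u} - \resembed{v})$. The squared-norm $\|\cdot\|^2$ is convex, so Jensen's inequality gives
\[
 \|\resembed{u} - \mathbf{p}\|^2 \;\le\; \sum_v \pi_v \,\|\resembed{u} - \resembed{v}\|^2.
\]
By \Cref{lem:embeddist}, each term on the right equals $\er(u,v)$, so the bound becomes $\sum_v \pi_v \,\er(u,v)$.

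Second, I would translate effective resistance into hitting times using the identity stated in the excerpt: $\er(u,v) = K_{uv}/(2M) = (H_{uv} + H_{vu})/(2M)$. Since both $H_{uv}$ and $H_{vu}$ are at most $H_{\max}$ by \Cref{def:eff-res-rad}, we obtain $\er(u,v) \le H_{\max}/M$ uniformly in $v$. Combining this uniform bound with $\sum_v \pi_v = 1$ yields $\sum_v \pi_v \er(u,v) \le H_{\max}/M$, which is exactly the claim.

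There is no real obstacle here: the argument is essentially one application of Jensen's inequality plus two substitutions from already-stated identities. The only small subtlety is recognising that the stationarity-weighted mean $\mathbf{p}$ is the right object to feed into Jensen, which makes the $\pi_v$-weighted sum of commute times collapse cleanly into a bound of the form $H_{\max}/M$ (whereas using a uniform average over $u$ would not give as tight a bound for irregular graphs).
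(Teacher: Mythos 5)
Your proof is correct and follows essentially the same route as the paper, which sketches the argument as an immediate consequence of $\mathbf{p}$ being a convex combination of the $\resembed{v}$'s and Jensen's inequality; you simply spell out the remaining substitutions ($\|\resembed{u}-\resembed{v}\|^2 = \er(u,v) = (H_{uv}+H_{vu})/(2M) \le H_{\max}/M$) that the paper leaves implicit.
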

The proof of \Cref{thm:dist-to-kmax} follows immediately
from the fact that $\mathbf{p}$ is a convex combination
of all $\resembed{}$'s and Jensen's inequality.
\subsection{Incorporating Features into MPNNs} \label{sec:featmpnn}
We reiterate that our main aim is to demonstrate (theoretically and empirically) that there are good reasons to incorporate affinity-based measures into GNN computations.

In the simplest instance, a method that improves a GNN's expressive power may compute additional \emph{features} (positional or structural) which would assist the GNN in discriminating between examples it otherwise wouldn't (easily) be able to. These features are then appended to the GNN's inputs for further processing. For example, it has been shown that endowing nodes with a one-hot based \emph{identity} is already sufficient for improving expressive power \citep{murphy2019relational}; this was then relaxed to any randomly-sampled scalar feature by Sato et al.~\citep{sato2021random}. It is, of course, possible to create dedicated features that even count substructures of interest \citep{bouritsas2020improving}. Further, the adjacency information can be factorised \citep{qiu2018network} or eigendecomposed \citep{dwivedi2021graph} to provide useful structural embeddings for the GNN.

\begin{wrapfigure}{r}{0.3\textwidth}
    \centering
    \includegraphics[width=0.25\textwidth]{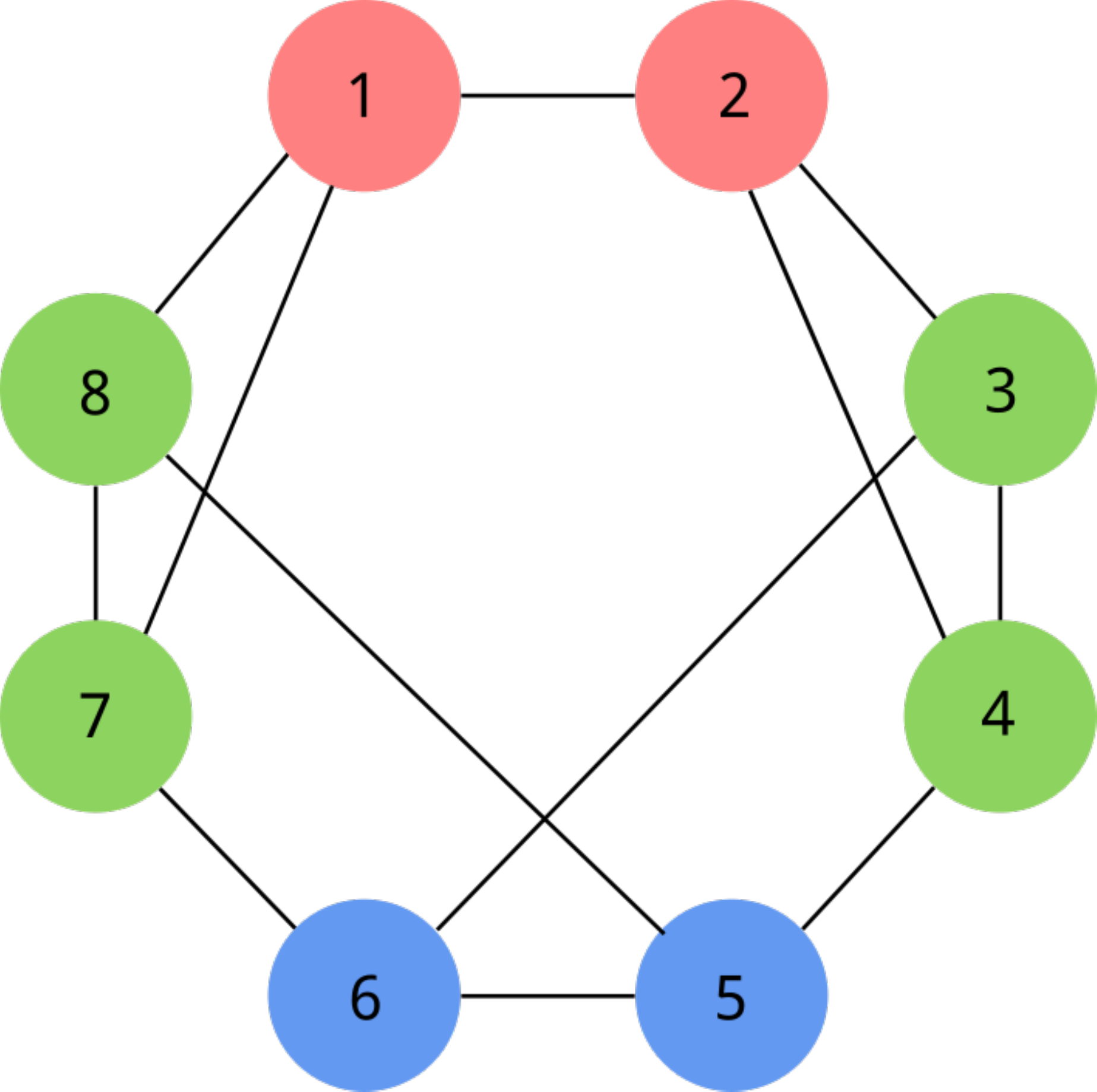}
    \caption{Degree 3 graph on 8 nodes, with isomorphism classes indicated by colors. While nodes of the same color are structurally identical, nodes of different colors are not. A standard GNN limited by the 1-WL cannot distinguish between nodes of different colors. However, affinity based networks that use effective resistances, hitting times, or resistive embeddings can distinguish every pair of such nodes.}
    \vspace{-30pt}
    \label{fig:deg3graph}
\end{wrapfigure}

We will focus our attention on exactly this class of methods, as it is a lightweight and direct way of demonstrating improvements from these computations. Hence, our baselines will all be instances of the MPNN framework \citep{gilmer2017neural}, which we will attempt to improve by endowing them with affinity-based features. We start by theoretically proving that these features indeed improve expressive power:

\begin{theorem}
MPNNs that make use of any one of (a) effective resistances, (b) hitting times, (c) resistive embeddings are strictly more powerful than the WL-1 test.
\end{theorem}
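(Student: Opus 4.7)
The plan is to split the proof into two parts. First a routine ``at least as powerful'' direction, and then a ``strictly more'' direction obtained from an explicit pair of non-isomorphic graphs. The two together give the theorem.

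For the ``at least as powerful'' direction, I would note that an MPNN equipped with extra input features (scalar edge features for (a) and (b), or node/edge feature vectors for (c)) can always zero out the contribution of those features in its first layer and recover a vanilla MPNN. Since vanilla MPNNs are known to match 1-WL expressivity (Xu et al.), the feature-augmented variants are at least as powerful as 1-WL.

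For the strict part, I would exhibit a canonical pair of non-isomorphic 3-regular graphs on 6 vertices that 1-WL cannot distinguish but that all three feature types do distinguish: for example $K_{3,3}$ and the triangular prism $C_3 \Box K_2$. Both have $n=6$, $m=9$, and are 3-regular; starting from constant initial colors, 1-WL produces a constant coloring at every iteration on any regular graph, so it cannot tell them apart. For effective resistance, $K_{3,3}$ is edge-transitive, so all 9 edges share the same value of $\er$. In the prism, the automorphism group has two edge orbits (triangle edges vs. matching edges), and a direct computation from $L^\dagger$ shows the two orbits yield different $\er$ values. Hence the multisets of edge-level ER features on the two graphs differ, and an MPNN with an MLP-based readout summing a nonlinear transform of each edge feature can distinguish them.

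For hitting times I would reduce to ER: since the stationary distribution $\pi$ is uniform on a regular graph, we have $H_{uv}=H_{vu}$ and therefore $H_{uv} = M \cdot \er(u,v)$ by the formula $\er(u,v) = K_{uv}/(2M)$. Thus the multisets of edge-level $H$-features also differ between the prism and $K_{3,3}$. For resistive embeddings, Lemma~\ref{lem:embeddist} gives $\|\resembed{u}-\resembed{v}\|_2^2 = \er(u,v)$, so an MPNN that forms pairwise squared distances of endpoint embeddings as edge features recovers exactly the ER signal and again distinguishes the two graphs.

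The main obstacle I anticipate is handling the rotation ambiguity of resistive embeddings cleanly: rather than arguing directly about the raw embeddings (which are only defined up to $O(n)$), the cleanest route is to observe that an MPNN's edge update can compute the rotation-invariant quantity $\|\resembed{u}-\resembed{v}\|_2^2$ from the two endpoint vectors, and then invoke Lemma~\ref{lem:embeddist} to reduce to the ER argument above. A secondary, minor technicality is verifying the ER values on the prism concretely, but this is a finite linear-algebra computation on a $6\times 6$ Laplacian and does not pose conceptual difficulty.
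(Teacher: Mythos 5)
Your overall route is sound and genuinely different from the paper's. The paper works with a \emph{single} 3-regular graph on 8 nodes whose vertices fall into three isomorphism classes, and shows that the multisets of ER edge features aggregated at nodes in different classes differ, i.e.\ it establishes node-level distinguishing power beyond 1-WL; hitting times and resistive embeddings are then handled by observing that $\er$ is recoverable from them (via $\er(u,v)=\frac{1}{2M}(H_{uv}+H_{vu})$ and Lemma~\ref{lem:embeddist}). You instead exhibit a \emph{pair} of non-isomorphic 3-regular graphs ($K_{3,3}$ versus the triangular prism) that 1-WL cannot separate but whose edge-level ER multisets differ ($K_{3,3}$ is edge-transitive with a single value, while the prism has two edge orbits with values $8/15$ and $3/5$, which Foster's theorem or a direct $L^{\dagger}$ computation confirms are distinct), giving graph-level separation. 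Both arguments are valid instantiations of ``strictly more powerful than 1-WL''; yours matches the more standard graph-isomorphism reading of the claim, while the paper's gives the finer node-distinguishability statement. Your ``at least as powerful'' direction (ignore the extra features) and your treatment of resistive embeddings (compute the rotation-invariant quantity $\|\resembed{u}-\resembed{v}\|_2^2$ and invoke Lemma~\ref{lem:embeddist}) coincide with the paper's reasoning.

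One step needs repair: you justify the hitting-time case by claiming that a uniform stationary distribution on a regular graph forces $H_{uv}=H_{vu}$. That is false in general; by Tetali's identity, for a regular graph $H_{uv}-H_{vu}=\frac{2M}{n}\sum_i\bigl(\er(v,i)-\er(u,i)\bigr)$, so symmetry requires all vertices to have equal total resistance to the rest of the graph, which fails for regular graphs that are not sufficiently symmetric (the paper's own 8-node example has three distinct vertex classes). Your conclusion still holds for your specific graphs because $K_{3,3}$ and the prism are vertex-transitive, which does force $H_{uv}=H_{vu}$ and hence $H_{uv}=M\cdot\er(u,v)$ on edges; alternatively you can avoid symmetry altogether by reducing as the paper does, noting that $\er(u,v)$ is a fixed function of the ordered pair $(H_{uv},H_{vu})$, so any distinction visible to ER features is visible to hitting-time features. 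With that one-sentence fix the proof is complete.
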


\begin{proof}
Since the networks in question arise from augmenting standard MPNNs with additional node/edge features, we have that these networks are at least as powerful as the 1-WL test.

In order to show that these networks are strictly \emph{more powerful} than the 1-WL test, it suffices to show the existence of a graph for which our affinity measure based networks can distinguish between certain nodes that a standard GNN (limited by the 1-WL test) cannot.

We present an example of a 3-regular graph (see Figure~\ref{fig:deg3graph}) on 8 nodes.
%
It is well-known that a standard GNN that is limited by the 1-WL test cannot distinguish any pair of nodes in a regular graph, as the computation tree rooted at any node in the graph looks identical. However, there are  three isomorphism classes of nodes in the above graph (denoted by different colors), namely, $V_1 = \{1,2\}$, $V_2 = \{3,4,7,8\}$, and $V_3 = \{5,6\}$.

We now show that GNNs with affinity based measures can distinguish between a node in $V_i$ and a node in $V_j$, for $i\neq j$. We note that the hitting time from $a$ to $b$ depends only on the isomorphism classes of $a$ and $b$. Thus, we write $r_{i,j}$ as the effective resistance between a node in $V_i$ and a node in $V_j$. Note that $r_{i,j} = r_{j,i}$, and it is easy to verify that
\(
 r_{1,1} = 2/3, r_{2,2} = 15/28, r_{3,3} = 4/7, 
 r_{1,2} = r_{2,1} = 185/336, r_{2,3} = r_{3,2} = 209/336.
\)
Hence, it follows that in a message passing step of an MPNN that uses effective resistances, vertices in $V_1$, $V_2$, and $V_3$ will aggregate feature multisets $\{r_{1,1}, r_{1,2}, r_{1,2}\} = \{2/3, 185/336, 185/336\}$, $\{r_{2,1}, r_{2,2}, r_{2,3}\} = \{185/336, 15/28, 209/336\}$, and $\{r_{2,3}, r_{2,3}, r_{3,3}\} = \{209/336, 209/336, 4/7\}$, respectively, all of which are all distinct multisets. Hence, such an MPNN can distinguish nodes in $V_i$ and $V_j$, $i\neq j$ for a suitable aggregation function.

Note that if instead of scalar effective resistance features, an MPNN uses hitting time features or resistive embeddings, it can distinguish the same isomorphism classes as above because the effective resistance between nodes is a function of the two hitting times in either direction as well as of the resistive embeddings of the two nodes (see \cref{lem:embeddist}). In other words, one can reduce to the case of scalar effective resistance features by composing the aggregation function with the appropriate function that transforms either hitting times or resistive embeddings to effective resistances. {\qedhere}


\end{proof}

We also compare our method to other strong feature-based approaches in the literature. The recently proposed DE-GNNs~\cite{LiWWL20} are arguably one of the closest proposals to ours, as they compute distance-encoded features. These features can be at least as powerful as our proposed affinity-based features \emph{if} polynomially many powers of the adjacency matrix are used. However, for all but the smallest graphs, using this many powers will be impractical---in fact, \citep{LiWWL20} only use powers of $A$ up to 3, which would not be able to reliably approximate affinity-based features. 

Ultimately, a technique that improves a GNN's expressive power can do so in \emph{three} broad directions. While we focus on the feature-based direction in this paper, we also acknowledge that it in no way compels the GNN to use the additional provided features. Hence, we briefly survey the other two, as an indication of the future research in affinity-based GNNs we hope this work will inspire.

One avenue involves modulating the \emph{message passing rule} to make advantage of the desired computations. Popular recent examples of this include DGN \citep{beaini2021directional} and LSPE \citep{dwivedi2021graph}. DGNs leverage the graph's Laplacian eigenvectors, but they do not merely use them as input features; instead, they define a directional \emph{vector field} based on the eigenvectors, and use it explicitly to anisotropically aggregate neighbourhoods. LSPE features a ``bespoke pipeline'' for processing positional inputs.

The other direction is to modulate the \emph{graph} over which messages are passed, usually by adding new nodes that correspond to desired substructures. An early proponent of this is the work of  \citep{morris2019weisfeiler}, which explicitly performs message passing over $k$-tuples of nodes at once. Recently, scalable efforts in this direction focus on carefully chosen substructures, e.g., junction trees~\citep{fey2020hierarchical}, cellular complexes \citep{bodnar2021weisfeiler}.

\subsection{Effective Resistance vs. Shortest Path Distance}
It is interesting to compare the use of effective resistances with shortest path distances (SPDs) in GNNs, given the considerable number of recent works that make use of SPDs (e.g., Graphormer~\citep{YingCLZKHSL21}, Position-Aware GNN~\citep{you2019position}, DE-GNN~\cite{LiWWL20}). The most direct comparison of our effective resistance-based MPNNs would be to use SPDs as edge features in the MPNNs. However, note that SPDs along graph edges are a trivial feature (unlike effective resistances, which still incorporate useful information about the global graph structure).

An alternative to edge features would be to use (a) SPDs to a small set of \emph{anchor nodes} as features in an MPNN (e.g., P-GNN \cite{you2019position}) or (b) a dense featurization incorporating shortest paths between all pairs of nodes (e.g., the dense attention mechanism in Graphormer~\cite{YingCLZKHSL21}). We remark that the latter approach typically incurs an $O(n^2)$ overhead, which our MPNN-based approach avoids.

We empirically compare our models to MPNNs that use approaches (a) and (b). Results on the PNA dataset show that our effective resistance-based GNNs outperform these approaches. Furthermore, we complement these empirical results with a theoretical result showing that under a limited number of message-passing steps, effective resistance features can allow one to distinguish structures that cannot be done using shortest path features. We point the reader to the appendix for these results.


\section{Efficient Computation of Affinity Measures}
In order to use our features, it is important that they be computable efficiently. In this section, we show how to compute or approximate the various random walk-based affinity measures.

\subsection{Reducing Dimensionality of Resistive Embeddings}
For each edge, we seek to use the resistive embeddings as GNN features instead of the effective resistance. Now, the difficulty with using resistive embeddings directly as GNN features is the fact that the embeddings have dimension $m$, which can be quite large, e.g., up to $n^2$ for dense graphs.
It was shown by
\citep{SpielmanS11} that one can
reduce the dimensionality of the embedding while approximately preserving Euclidean distances. 
The idea is to use a random projection via a constructive version of the Johnson-Lindenstrauss Lemma:
\begin{lemma}[Constructive Johnson-Lindenstrauss]\label{lem:jl}
 Let $x_1, x_2, \ldots, x_n \in \R^d$ be a set of $n$ points; 
 and let $\alpha_1, \alpha_2, \ldots, \alpha_m \in \R^n$ be 
 fixed linear combinations. 
 Suppose $\Pi$ is a $k\times d$ matrix whose entries are chosen i.i.d. from a Gaussian $N(0,1)$
 and consider $\widehat{x_i} := \frac{1}{\sqrt{k}} \Pi x_i$.
 
 Then, it follows that for $k \geq C \log(m n)/\epsilon^2$, with probability $1-o(1)$, we have
 \(
   \|\sum_j \alpha_{i,j} \widehat{x}_j\|^2
   = (1\pm \epsilon) \|\sum_j \alpha_{i,j} x_j\|_2^2
\)   
 %
for every $1\leq j \leq m$.
\end{lemma}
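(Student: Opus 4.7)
The approach is the standard Johnson--Lindenstrauss argument, adapted to the setting of preserving norms of linear combinations rather than original points. The plan has three steps: linearize, apply Gaussian concentration to each target, and union bound.

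First I would use linearity of $\Pi$ to reduce the statement to a familiar form. Since $\widehat{x}_j = \frac{1}{\sqrt{k}} \Pi x_j$, we have
\(
\sum_j \alpha_{i,j}\, \widehat{x}_j \;=\; \tfrac{1}{\sqrt{k}}\, \Pi\bigl(\sum_j \alpha_{i,j}\, x_j\bigr) \;=\; \tfrac{1}{\sqrt{k}}\, \Pi y_i,
\)
where $y_i := \sum_j \alpha_{i,j} x_j$ is a fixed (deterministic) vector in $\R^d$. The claim thus becomes: show that $\tfrac{1}{k}\|\Pi y_i\|_2^2 = (1\pm \epsilon)\|y_i\|_2^2$ simultaneously for every $i=1,\dots,m$.

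Next, for each fixed $y \in \R^d$, the vector $\Pi y$ is a $k$-dimensional Gaussian with covariance $\|y\|_2^2 I_k$, so $\|\Pi y\|_2^2 / \|y\|_2^2$ is distributed as $\chi^2_k$. A standard chi-squared tail bound (e.g., Laplace transform / Bernstein-type estimate) gives, for $\epsilon \in (0,1)$,
\(
\Pr\!\left[\,\bigl|\tfrac{1}{k}\|\Pi y\|_2^2 - \|y\|_2^2\bigr| > \epsilon \|y\|_2^2\,\right] \;\le\; 2\exp(-c k \epsilon^2)
\)
for an absolute constant $c>0$. Applying this to each $y_i$ and union-bounding over $i=1,\dots,m$ yields a failure probability of at most $2m\exp(-c k \epsilon^2)$. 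Choosing $k \ge C \log(mn)/\epsilon^2$ for a sufficiently large constant $C=C(c)$ makes this $o(1)$, which is the desired conclusion.

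There is no real obstacle in this proof; the only nontrivial input is the chi-squared concentration inequality, which is a textbook result. A minor stylistic note: the $\log(mn)$ in the bound is slightly generous---only $\log m$ is strictly needed for the stated claim. The factor $\log n$ is presumably included to absorb, via the same argument, the (at most) $\binom{n}{2}$ difference vectors $x_j - x_{j'}$ as additional linear combinations in the union bound, so that pairwise distances among the original $x_j$'s are preserved as well. This costs only a constant factor in $C$.
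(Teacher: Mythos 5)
Your proof is correct: the paper itself gives no proof of this lemma (it is invoked as the standard constructive Johnson--Lindenstrauss result, in the spirit of Spielman--Srivastava), and your argument---reducing via linearity of $\Pi$ to the fixed vectors $y_i=\sum_j \alpha_{i,j}x_j$, applying the $\chi^2_k$ concentration bound to each, and union-bounding over the $m$ combinations---is exactly the textbook route one would cite for it. Your side remark that $\log m$ suffices for the literal claim, with the $\log(mn)$ slack presumably there to also absorb pairwise differences of the original points, is also accurate.
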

Using triangle inequality, we can see that the inner products
between fixed linear combinations are preserved up to some additive error: 
\begin{corollary}
\label{thm:jl-inner-product}
For any fixed vectors
$\alpha,\beta \in \R^n$,
if we let $X := \sum_i \alpha_i {x}_i$, $\widehat{X} := \sum_i \alpha_i \widehat{x}_i$ and similarly 
$Y := \sum_i \beta_i {x}_i$, 
$\widehat{Y} := \sum_i \beta_i \widehat{x}_i$; then 
\(
\left|
\langle X, Y \rangle 
- \langle \widehat{X}, \widehat{Y} \rangle \right|
\le \frac{\epsilon}{2} \left(\|X\|^2 + \|Y\|^2\right).
\)
\end{corollary}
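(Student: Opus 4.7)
The plan is to reduce the inner-product statement to the squared-norm guarantee of \Cref{lem:jl} via the polarization identity. The key observation is that since $\widehat{x}_i = \frac{1}{\sqrt{k}}\Pi x_i$ and the projection is linear, we have $\widehat{X} = \frac{1}{\sqrt{k}}\Pi X$ and $\widehat{Y} = \frac{1}{\sqrt{k}}\Pi Y$, so also $\widehat{X}\pm\widehat{Y} = \frac{1}{\sqrt{k}}\Pi(X\pm Y)$. Thus the quantities $X+Y$ and $X-Y$ are themselves fixed linear combinations of the $x_i$'s (corresponding to the coefficient vectors $\alpha+\beta$ and $\alpha-\beta$), which lets us apply \Cref{lem:jl} directly to them.

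First I would invoke \Cref{lem:jl} with the linear combinations $\alpha+\beta$ and $\alpha-\beta$ (in addition to whatever else is already being tracked), yielding
\[
\bigl|\|X+Y\|^2 - \|\widehat{X}+\widehat{Y}\|^2\bigr| \le \epsilon\,\|X+Y\|^2, \qquad \bigl|\|X-Y\|^2 - \|\widehat{X}-\widehat{Y}\|^2\bigr| \le \epsilon\,\|X-Y\|^2.
\]
Next I would apply the polarization identity in both the original and projected spaces, namely $\langle X,Y\rangle = \tfrac{1}{4}(\|X+Y\|^2 - \|X-Y\|^2)$ and the analogous identity for $\widehat{X},\widehat{Y}$. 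Subtracting and applying the triangle inequality gives
\[
\bigl|\langle X,Y\rangle - \langle\widehat{X},\widehat{Y}\rangle\bigr| \le \tfrac{\epsilon}{4}\bigl(\|X+Y\|^2 + \|X-Y\|^2\bigr).
\]
Finally I would use the parallelogram law, $\|X+Y\|^2+\|X-Y\|^2 = 2(\|X\|^2+\|Y\|^2)$, to obtain the claimed bound $\tfrac{\epsilon}{2}(\|X\|^2+\|Y\|^2)$.

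There is essentially no main obstacle here: the proof is a short algebraic manipulation, and the only subtlety is making sure that \Cref{lem:jl} is applied with enough linear combinations to cover both $\alpha+\beta$ and $\alpha-\beta$. Since the lemma allows $m$ fixed combinations and we only add two more, this affects the required $k$ only by a constant factor inside the logarithm, so the asymptotic dimension bound is unchanged. The statement in the excerpt attributes the corollary to the triangle inequality, but the cleanest route is through polarization plus the parallelogram law, which I would present as above.
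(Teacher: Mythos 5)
Your proof is correct and follows essentially the same route as the paper's own argument: polarization identity applied to both the original and projected vectors, the squared-norm guarantee of \Cref{lem:jl} for the combinations $\alpha+\beta$ and $\alpha-\beta$, and the parallelogram law to convert $\|X+Y\|^2+\|X-Y\|^2$ into $2(\|X\|^2+\|Y\|^2)$. Your side remark is also apt — despite the paper's text attributing the corollary to the triangle inequality, its appendix proof is exactly this polarization argument, so there is nothing to add.
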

The proof of \Cref{thm:jl-inner-product} can be found in
\Cref{apx:proofs}.

Therefore, we can choose a desired $\epsilon > 0$ and $k = O(\log(n)/\epsilon^2)$ and instead use $\approxrembed{}: V \to \R^k$ as the embedding, where
\(
 \approxrembed{v} = \frac{1}{\sqrt{k}} \Pi B L^{\dagger} e_v 
\)
for a randomly chosen $k\times d$ matrix $\Pi$ whose entries are i.i.d. Gaussians from $\mathcal{N}(0,1)$. Then, by \Cref{lem:embeddist} and \Cref{lem:jl}, we have that for every edge $(u,v)\in E$,
\(
 \|\approxrembed{u} - \approxrembed{v}\|_2^2 = (1 \pm 3\epsilon) \er(u,v)
\)
with probability at least $1 - \frac{1}{n^2}$.

So the computation of random embeddings, $\approxrembed{}$, requires
solving $O((n+m) \log n / \epsilon^2)$ many Laplacian linear systems.
By using one of the fast Laplacian solvers \cite{KoutisMP14}, 
we can compute the random embeddings in the near-linear time.
Hence the total running time becomes $\widetilde{O} (n+m)$.

\subsection{Fast Computation of Hitting Times}
Note that it is not clear how to compute hitting times efficiently as in the case of commute times / effective resistances. 
The naive approach involves solving a linear system for each edge, 
resulting in a running time of $\Omega(n m)$, which is
prohibitive. 
One of our technical contributions in this paper is a method for fast computation of hitting times. In particular, we will show how to use the
approximate effective resistance embeddings, $\approxrembed{}$, to 
obtain an estimate for hitting times with additive error. 

%
Let $\widehat{\mathbf{p}} := \sum_u \pi_u \approxrembed{u}$. 
Just like $\approxrembed{}$ being an approximation of $\resembed{}$,
$\widehat{\mathbf{p}}$ is an approximation of $\mathbf{p}$.
Consider the following quantity, 
    \(
    \widehat{H}_{u,v}
    = 2 M 
    \langle \approxrembed{v}-\approxrembed{u}, \approxrembed{v} - \widehat{\mathbf{p}}\rangle\).
%
We will use this quantity as an approximation of $H_{u,v}$.
In the following part, we will bound the difference
between $H_{u,v}$ and $\widehat{H}_{u,v}$. Our starting
point will be expressing $H_{u,v}$ in terms of
the effective resistance embeddings. 
\begin{lemma}
\label{thm:hit-to-resembed}
${H}_{u,v} = 2 M 
\langle \resembed{v}-\resembed{u}, \resembed{v} - {\mathbf{p}}\rangle$
where $\mathbf{p} := \sum_u \pi_u \resembed{u}$. 
\end{lemma}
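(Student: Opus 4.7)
The plan is to prove the identity by reducing the inner-product expression on the right to a known closed form for hitting times in terms of effective resistances, namely Tetali's formula
\[
H_{uv} = \tfrac{1}{2}\sum_w d_w\bigl(\er(u,v)+\er(v,w)-\er(u,w)\bigr),
\]
which, after substituting $\pi_w = d_w/(2M)$, becomes
\[
H_{uv} = M\sum_w \pi_w\bigl(\er(u,v)+\er(v,w)-\er(u,w)\bigr) = M\,\er(u,v) + M\sum_w \pi_w\bigl(\er(v,w)-\er(u,w)\bigr).
\]
So the game is to massage $2M\,\langle \resembed{v}-\resembed{u},\,\resembed{v}-\mathbf{p}\rangle$ into exactly this expression.

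First, I would apply the polarization identity $2\langle a,b\rangle = \|a\|^2+\|b\|^2-\|a-b\|^2$ with $a=\resembed{v}-\resembed{u}$ and $b=\resembed{v}-\mathbf{p}$, noting that $a-b = \mathbf{p}-\resembed{u}$. This gives
\[
2\bigl\langle \resembed{v}-\resembed{u},\,\resembed{v}-\mathbf{p}\bigr\rangle = \|\resembed{v}-\resembed{u}\|^2 + \|\resembed{v}-\mathbf{p}\|^2 - \|\resembed{u}-\mathbf{p}\|^2.
\]
By \Cref{lem:embeddist}, the first term on the right is $\er(u,v)$.

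Next, I would dispose of the two $\|\cdot-\mathbf{p}\|^2$ terms using the standard weighted bias-variance identity for a mean: since $\mathbf{p}=\sum_w \pi_w \resembed{w}$ is the $\pi$-weighted mean of the embeddings,
\[
\sum_w \pi_w \|\resembed{x}-\resembed{w}\|^2 = \|\resembed{x}-\mathbf{p}\|^2 + \sum_w \pi_w \|\resembed{w}-\mathbf{p}\|^2
\]
holds for every $x$. Applying this for $x=v$ and $x=u$ and subtracting kills the common variance term, yielding
\[
\|\resembed{v}-\mathbf{p}\|^2 - \|\resembed{u}-\mathbf{p}\|^2 = \sum_w \pi_w\bigl(\er(v,w) - \er(u,w)\bigr),
\]
where I have again invoked \Cref{lem:embeddist} to rewrite $\|\resembed{x}-\resembed{w}\|^2 = \er(x,w)$.

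Plugging back in and multiplying by $M$,
\[
2M\,\bigl\langle \resembed{v}-\resembed{u},\,\resembed{v}-\mathbf{p}\bigr\rangle = M\,\er(u,v) + M\sum_w \pi_w\bigl(\er(v,w) - \er(u,w)\bigr),
\]
which is exactly Tetali's expression for $H_{uv}$, completing the proof. The only step that is not routine is invoking Tetali's formula; everything else is algebra on the embedding side. If citing Tetali is not desirable, one could instead verify the identity directly by expanding $\resembed{v}=C^{1/2}BL^\dagger\ind_v$ and using $B^T C B = L$ together with $L L^\dagger \ind_u = \ind_u - \tfrac{1}{n}\mathbf{1}$ (on a connected graph), but this is longer and the final substitution $\sum_w d_w \ind_w = L\mathbf{1} + \text{diag stuff}$ is what produces the factor $d_w$ — which is precisely the content of Tetali's identity.
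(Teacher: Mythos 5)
Your proof is correct and rests on exactly the same ingredients as the paper's: Tetali's formula for hitting times, the relation $K_{uv} = 2M\,\er(u,v)$, and \Cref{lem:embeddist}. The only difference is cosmetic — you run the algebra from the inner-product side (polarization plus the weighted bias--variance identity around $\mathbf{p}$), whereas the paper starts from Tetali's formula and expands the weighted sum of squared embedding distances directly; both reduce to the same computation.
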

The proof of \Cref{thm:hit-to-resembed} can
be found in \Cref{apx:proofs}.

\begin{theorem}
\label{thm:approx-ht}
$|\widehat{H}_{u,v} - H_{u,v}| \le  
3 \epsilon H_{\max}$.
\end{theorem}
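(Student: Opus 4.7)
The plan is to bound the inner product in the definition of $\widehat{H}_{u,v}$ against the true inner product expression for $H_{u,v}$ coming from \Cref{thm:hit-to-resembed}, by invoking the Johnson--Lindenstrauss inner-product preservation from \Cref{thm:jl-inner-product}, and then to upper bound the two norms that appear in that additive error by $H_{\max}/M$.

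First I would rewrite both quantities using the effective resistance embeddings. By \Cref{thm:hit-to-resembed}, $H_{u,v} = 2M \langle X, Y\rangle$ with $X := \resembed{v} - \resembed{u}$ and $Y := \resembed{v} - \mathbf{p}$. By definition, $\widehat{H}_{u,v} = 2M \langle \widehat{X}, \widehat{Y}\rangle$ with $\widehat{X} = \approxrembed{v} - \approxrembed{u}$ and $\widehat{Y} = \approxrembed{v} - \widehat{\mathbf{p}}$. The key observation is that since $\mathbf{p} = \sum_w \pi_w \resembed{w}$, both $X$ and $Y$ are fixed linear combinations of the resistive embeddings $\{\resembed{w}\}_w$, and the corresponding images $\widehat{X}, \widehat{Y}$ are the same linear combinations applied to $\{\approxrembed{w}\}_w$. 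Hence the hypothesis of \Cref{thm:jl-inner-product} is satisfied.

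Next I would apply \Cref{thm:jl-inner-product} to get
\(
\bigl|\langle X,Y\rangle - \langle \widehat{X},\widehat{Y}\rangle\bigr|
\le \tfrac{\epsilon}{2}\bigl(\|X\|^2 + \|Y\|^2\bigr),
\)
so that $|H_{u,v} - \widehat{H}_{u,v}| \le M\epsilon\,(\|X\|^2 + \|Y\|^2)$. It then remains to bound the two norms. By \Cref{lem:embeddist}, $\|X\|^2 = \er(u,v) = K_{uv}/(2M) = (H_{uv}+H_{vu})/(2M) \le H_{\max}/M$. By \Cref{thm:dist-to-kmax}, $\|Y\|^2 = \|\resembed{v} - \mathbf{p}\|^2 \le H_{\max}/M$. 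Combining, $\|X\|^2 + \|Y\|^2 \le 2H_{\max}/M$, and therefore $|\widehat{H}_{u,v} - H_{u,v}| \le 2\epsilon H_{\max}$, which is within the claimed $3\epsilon H_{\max}$ (the small extra slack can absorb lower-order terms, e.g., from the $(1\pm 3\epsilon)$ factor in the JL approximation of $\er(u,v)$ if one wants to carry that through $\|X\|^2$ rigorously).

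The only real subtlety is verifying the JL hypothesis: we need $\Pi$ to preserve inner products simultaneously for all $O(m)$ pairs $(X,Y)$ we care about (one per edge $(u,v)$, with $Y$ always involving the single fixed $\mathbf{p}$). This follows by a union bound, paying only a logarithmic factor in the projection dimension $k$, which is already accounted for in \Cref{lem:jl}. The main ``obstacle'' is conceptual rather than technical: one must recognize that $\mathbf{p}$ and $\widehat{\mathbf{p}}$ are corresponding fixed linear combinations so that the JL guarantee applies to the inner product $\langle X, Y\rangle$ directly, rather than having to control the error in $\widehat{\mathbf{p}}$ separately.
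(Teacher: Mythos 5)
Your proof is correct and follows essentially the same route as the paper: rewrite $H_{u,v}$ via \Cref{thm:hit-to-resembed}, apply the inner-product preservation of \Cref{thm:jl-inner-product}, and bound the two norms using \Cref{lem:embeddist} (with $\er(u,v)=K_{uv}/2M$) and \Cref{thm:dist-to-kmax}. Your sharper constant $2\epsilon H_{\max}$ is consistent with the paper's calculation, which simply states the looser $3\epsilon H_{\max}$.
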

\begin{proof}
Using~\Cref{thm:hit-to-resembed}, we see that
$|\widehat{H}_{u,v} - H_{u,v}|$ can be bounded as
\begin{align*}
    2 M \left| \langle \approxrembed{v}-\approxrembed{u}, \approxrembed{v} - \widehat{\mathbf{p}}\rangle
    - \langle \resembed{v} - \resembed{u}, \resembed{v} - \mathbf{p}
    \rangle \right| 
    \le \epsilon M \left(
    \|\resembed{v} - \resembed{u}\|^2 
    + \|\resembed{v} - \mathbf{p}\|^2\right) 
    \le 3 \epsilon H_{\max}, 
\end{align*} 
where we used \Cref{thm:jl-inner-product} in the first inequality and
\Cref{def:eff-res-rad} in the last inequality.
\end{proof}

\section{Experiments}

\subsection{Baselines}

As previously discussed, our empirical evaluation seeks to show benefits from endowing standard expressive GNNs with additional affinity-based features. All architectures we experiment with will therefore conform to the message passing neural network (MPNN) blueprint \citep{gilmer2017neural}, which we now briefly describe for convenience.  

Assume that our input graph, $\mathcal{G}=(\mathcal{V},\mathcal{E})$, has node features $\vec{x}_u\in\mathbb{R}^n$, edge features $\vec{x}_{uv}\in\mathbb{R}^m$ and graph-level features $\vec{x}_\mathcal{G}\in\mathbb{R}^l$, for nodes $u, v\in\mathcal{V}$ and edges $(u, v)\in\mathcal{E}$. We provide encoders $f_n:\mathbb{R}^n\rightarrow\mathbb{R}^k$, $f_e:\mathbb{R}^m\rightarrow\mathbb{R}^k$ and $f_g:\mathbb{R}^l\rightarrow\mathbb{R}^k$ that transform these inputs into a latent space:
\begin{equation}\label{eqn:enc}
    \vec{h}_u^{(0)} = f_n(\vec{x}_u) \qquad \vec{h}_{uv}^{(0)} = f_e(\vec{x}_{uv}) \qquad
    \vec{h}_\mathcal{G}^{(0)} = f_g(\vec{x}_\mathcal{G})
\end{equation}
Our \textit{MPNN} then performs several message passing steps 
\(
    \vec{H}^{(t+1)} = P_{t+1}(\vec{H}^{(t)}) 
\)
where $\vec{H}^{(t)} = \Big(\big\{\vec{h}_u^{(t)}\big\}_{u\in\mathcal{V}}, \big\{\vec{h}_{uv}^{(t)}\big\}_{(u,v)\in\mathcal{E}}, \vec{h}_\mathcal{G}^{(t)}\Big)$ contains all of the latents at a particular processing step $t\geq 0$. 

This process is iterated for $T$ steps, recovering final latents $\vec{H}^{(T)}$. These can then be \emph{decoded} into node-, edge-, and graph-level predictions (as required), using analogous decoder functions $g_n$, $g_e$, $g_g$:
\begin{equation}
    \vec{y}_u = g_n(\vec{h}_u^{(T)}), \qquad \vec{y}_{uv} = g_e(\vec{h}_{uv}^{(T)}), \qquad
    \vec{y}_\mathcal{G} = g_g(\vec{h}_\mathcal{G}^{(T)})
\end{equation}
Generally, $f$, $g$ are MLPs, while we use an MPNN update rule for $P$. It computes message vectors, $\vec{m}^{(t)}_{uv}$, to be sent across the edge $(u, v)$ and then aggregates them at the receiver nodes as follows:
\begin{align}\label{eqn:mpnn1}
    \vec{m}^{(t+1)}_{uv} = \psi_{t+1}\big(\vec{h}^{(t)}_u, \vec{h}^{(t)}_v, \vec{h}^{(0)}_{uv}\big);
    \quad
    \vec{h}^{(t+1)}_u = \phi_{t+1}\Big(\vec{h}^{(t)}_u, \sum_{u\in\mathcal{N}_v} \vec{m}^{(t+1)}_{vu}\Big).
\end{align}
The message function $\psi_{t+1}$ and the update function $\phi_{t+1}$ are both MLPs. All of our models have been implemented using the Jraph library \citep{jraph2020github}.

Occasionally, the dataset in question will be easy to overfit with the most general form of message function (\cref{eqn:mpnn1}). In these cases, we resort to assuming that $\psi$ factorises into an \emph{attention mechanism}
\(
    \vec{m}^{(t+1)}_{uv} = a_{t+1}\big(\vec{h}^{(t)}_u, \vec{h}^{(t)}_v, \vec{h}^{(0)}_{uv}\big)\psi_{t+1}\big(\vec{h}^{(t)}_u\big)
\), 
where the attention function $a$ is scalar-valued. We will refer to this particular MPNN baseline as a graph attention network (\textit{GAT}) \citep{velickovic2018graph}.

When relevant, we may also recall the results that a particular strong baseline (such as DGN \citep{beaini2021directional} or Graphormer \citep{Ying2021DoTR}) achieves on a dataset of interest. Note that these baselines modulate the message passing procedure rather than appending features, and are, hence, a different category to our method---their performance is provided for indicative reasons only. Where appropriate, we will use ``\textit{DGN (features)}'' to refer to an MPNN that uses the eigenvector flows as additional edge features, without modulating the mechanism.

\begin{table*}[hbtp]
    \centering
    \scalebox{0.95}{
    \begin{tabular}{cccccccc}
         \toprule
          \multicolumn{2}{c}{} & \multicolumn{3}{c}{\bf Node tasks} & \multicolumn{3}{c}{\bf Graph tasks} \\
         {\bf Model}                &  {\bf Avg score} & SSSP & Ecc & Lap feat & Conn & Diam & Spec rad \\ 
         \midrule 
         GAT            	  &  -1.730 &  -2.213 & -1.935 &	 	-2.644 & -0.618 & -1.430 &	-1.538	\\ 
         GCN                  & -1.592 & {-2.283}  &	-1.978 &	 	-1.698 & -0.618 & -1.432 & -1.541  \\
         MPNN                 & -2.665 & -2.235 & -2.419 & -3.116 & -1.887 & -2.681 & -3.652 \\ 
         MPNN (rand features) & -2.490 & -2.136 &	-1.808 &	-3.873 & -1.696 & -2.614 & -2.813  \\ 
         DGN (features)                  & -2.743 & -2.165 & -1.911 &	-4.184 & -1.858 &	-2.814 & -3.528 \\ \midrule 
         \textbf{ER GNN}     & -2.779 & -2.146 & -1.869 & -3.945 & \textbf{-1.962} & \textbf{-2.940} & {-3.811} \\ 
         \textbf{ER (node) embeddings} & -2.658 & -2.245 &	{-2.493} & -3.533 & -1.649 &	-2.886 & -3.144 \\ 
         \textbf{ER (edge) embeddings} & {-2.789} & -2.266 &	-2.125 & -4.253 & -1.664 & -2.807 & -3.617  \\ 
          \textbf{Hitting Times} 
          & 
          {-2.816} & -2.189 & -1.904 & \textbf{-4.397} & -1.888 & -2.796 & -3.720  \\ 
          \midrule
          \textbf{All ER features} &
          \textbf{-3.106} &
          \textbf{-2.789} &  \textbf{-3.082} & -4.047 & -1.858 & -2.894 & \textbf{-3.962} \\
          \bottomrule
    \end{tabular}
    }
    \caption{$log(MSE)$ on the PNA test dataset}
    \label{tab:pna_table}
\end{table*}

\subsection{Datasets}
\subsubsection{PNA dataset}
The first dataset we explore is the PNA dataset~\cite{corso2020principal}, which captures a multimodal setting. This consists of a collection of \textit{node tasks}, i.e., \textbf{(1)} Single-source shortest paths, \textbf{(2)} Eccentricity and \textbf{(3)} Laplacian features, as well as \textit{graph tasks}, i.e. \textbf{(4)} Connectivity, \textbf{(5)} Diameter and \textbf{(6)} Spectral radius. PNA is a set of structured tasks that complements our other datasets. Our results are given
in \Cref{tab:pna_table}.

As we can see from the table, even adding a single feature, 
effective resistance (ER GNN), yields the best average
score compared to other models. As expected using 
hitting times as edge features improve upon effective resistances. However once we combine
all ER features, which include effective resistances, hitting times as well as node and edge embeddings, we get 
the best scores. On these structured tasks, we can
see that the affinity based measures provide a significant
advantage.

\begin{wraptable}{r}{0.54\textwidth}
    \centering
    \footnotesize
    \begin{tabular}{p{40mm}p{24mm}}
      \toprule
       & \textbf{MolHIV} \newline Test \% ROC-AUC  \\
       \midrule 
       GCN & $76.06 \pm 0.97$ \\ 
       MPNN & $74.67 \pm 0.19$ \\ 
       MPNN + Random Features & $75.52 \pm 1.07$ \\
       DGN & $\textbf{79.70} \pm 0.97$ \\ \midrule
       \textbf{ER GNN} & $77.75 \pm 0.426$ \\ 
       \textbf{Hitting Times} & $76.56\pm 0.915$ \\ 
       \textbf{ER (node) embeddings} & $77.77 \pm 0.339$ \\ 
       \textbf{ER (edge) embeddings} & $76.18 \pm 0.992$ \\ 
       \textbf{ER (node) embeddings + HT} & $78.16 \pm 0.792$  \\ 
       \textbf{ER (node) embeddings} 
       \newline 
       (with random rotations) & $76.28\pm 0.541$ \\ 
       \textbf{ER (node) embeddings + HT} \newline 
       (with random rotations) & \textbf{$\textit{79.13} \pm 0.358$}  \\ 
       \bottomrule
    \end{tabular}
    \caption{Test \% AUC-ROC averaged over 5 seeds.}
    \vspace{-2mm}
    \label{molhiv_table}
\end{wraptable}

\subsubsection{Small molecule classification: ogbg-molhiv}
The \textit{ogbg-molhiv} dataset is a molecular property 
prediction dataset comprised of molecular graphs
without spatial information (such as atom coordinates). 
Each graph corresponds to a molecule, with nodes representing atoms
and edges representing chemical bonds. Each node has an associated
$9$-dimensional feature, containing atomic number and chirality, as well as other additional atom features such as formal charge and whether the atom is in the ring or not. The goal is to predict whether a molecule inhibits HIV virus replication or not (see \Cref{molhiv_table}). 

On this dataset, effective resistances provide an improvement
over the standard MPNN. We achieve the best performance
using ER node embeddings and hitting times with random rotations. With these features, our network achieves
${\bf 79.13\%} \pm 0.358$ test accuracy, which is close to DGN.

\subsubsection{Multi-task molecular classification: ogbg-molpcba}

The \textit{ogbg-molpcba} dataset comprises molecular graphs without spatial information (such as atom coordinates). The aim is to classify them across 128 different biological activities (a single molecule may exhibit multiple activities). We follow the baseline MPNN architecture and evaluation from \citep{godwin2022simple}, including its use of the recently proposed simple and effective regulariser, Noisy Nodes. Additionally, we experiment with the use of additional random features.

\begin{table*}[bp]
    \centering
    \scalebox{0.95}{
    \begin{tabular}{cccc}
    \toprule
        \multicolumn{1}{c}{} & \multicolumn{3}{c}{\bf Test Mean Average Precision}  \\
         {\bf Model} & {\em 4 layers} & {\em 8 layers} & {\em 16 layers}  \\
     \midrule
         MPNN \citep{godwin2022simple} & 27.75\% $\pm$ 0.20  & 27.91\% $\pm$ 0.22 & 27.64\% $\pm$ 0.25\\
         MPNN + Noisy Nodes \cite{godwin2022simple} & 27.92\% $\pm$ 0.11 & 28.07\% $\pm$ 0.14 & {\bf 28.29}\% $\pm$ 0.13 \\ 
         MPNN + Random Features & 26.69\% $\pm$ 0.18 & 26.94\% $\pm$ 0.23 & 27.06\% $\pm$ 0.21 \\
         MPNN + Random Features + Noisy Nodes & 27.25\% $\pm$ 0.13 & 27.55\% $\pm$ 0.17 & 27.90\% $\pm$ 0.18 \\        
         \midrule
         MPNN + Noisy Nodes + ER (ours) & {\bf 28.11}\% $\pm$ 0.19 & 28.27\% $\pm$ 0.17 & 28.28\% $\pm$ 0.14 \\
         MPNN + Noisy Nodes + HT (ours) & 28.03\% $\pm$ 0.15 & \underline{\bf 28.32}\% $\pm$ 0.13 & 28.20\% $\pm$ 0.19  \\
     \bottomrule
    \end{tabular}
    }
    \caption{ogbg-molpcba performance for various model depths. Best performance across all models is underlined.}
    \label{tab:molpcba}
\end{table*}

Mirroring the evaluation protocol of \citep{godwin2022simple}, Table \ref{tab:molpcba} compares the performance of incorporating ER and hitting time (HT) features into the baseline MPNN models with Noisy Nodes, at various depths. What can be noticed is that models utilising affinity-based features are capable of reaching as well as exceeding peak test performance (in terms of mean average precision). However, what's important is the effect of these features at lower depths: it is possible to achieve comparable or better levels of performance with \underline{half the layers}, when utilising ER or HT features. This result illustrates the potential benefit affinity-based computations can have on molecular benchmarks, especially when no spatial geometry is provided as input.

\subsubsection{Scaling to larger graphs: ogbn-arXiv}

Most expressive GNNs that rely on the computation of structural features have not been scaled beyond small molecular datasets (such as the ones discussed in prior sections). This is due to the fact that computing them requires (time or storage) complexity which is at least quadratic in the graph size---making them inapplicable even for modest-sized graphs. This is, however, not the case for our proposed affinity-based metrics. We demonstrate this by scalably computing them on a larger-scale node classification benchmark, \textit{ogbn-arXiv} (a citation network with the goal of predicting the arXiv category of each paper). \textit{ogbn-arXiv} has 169,343 nodes and 1,166,243 edges, making quadratic approaches infeasible.

As MPNN models overfit this transductive dataset quite easily, the dominant approach to tackling it are graph attention networks (GATs) \citep{velickovic2018graph}. Accordingly, we trained a simple four-layer GAT on this dataset, achieving 72.02\% $\pm$ 0.05 test accuracy. This compares with 71.97\% $\pm$ 0.24 reported for a related attentional baseline on the leaderboard \citep{zhang2018gaan}, indicating that our baseline performance is relevant.

ER embeddings on \textit{ogbn-arXiv} need to be exceptionally high-dimensional to achieve accurate ER estimates ($\sim$11,000 dimensions), hence we were unable to use them here. However, incorporating \emph{ER scalar} features into our GAT model yielded a statistically-significant improvement of 72.14\% $\pm$ 0.03 test accuracy. \emph{Hitting time} features improve this result further to {\bf 72.25}\% $\pm$ 0.04 test accuracy. This demonstrates that our affinity-based metrics can yield useful improvements even on larger scale graphs, which are traditionally out of reach for methods like DGN \citep{beaini2021directional} due to computational complexity limitations. 

Reliable global leaderboarding with respect to \textit{ogbn-arXiv} is difficult, as state-of-the art approaches rely either on privileged information (such as raw text of the paper abstracts), incorporating node labels as features \citep{wang2021bag}, post-processing the predictions \citep{huang2020combining}, or various related tricks \citep{wang2021bag}. With that in mind, we report for convenience that the current state-of-the-art performance for \textit{ogbn-arXiv} without using raw text is 76.11\% $\pm$ 0.09 test accuracy, achieved by GIANT-XRT+DRGAT.

\subsubsection{Large scale graph regression: OGB-LSC PCQM4Mv1}

We finally include experimental results for one of the largest-scale publicly available graph regression tasks: the \textit{PCQM4Mv1} dataset from the OGB Large Scale Challenge \cite{hu2021ogb}. \textit{PCQM4M} is a quantum chemistry dataset spanning 4 million small molecules, with a task to predict the HOMO-LUMO gap, an important quantum-chemical property. It is anticipated that structural features such as ER could be of great help on this task, as the v1 version of it is provided without any structural information, and the molecule's geometry is assumed critical for predicting the gap. We report the single-model validation performance on this dataset, in line with previous works \cite{godwin2022simple,Ying2021DoTR,addanki2021large}.

\textit{PCQM4Mv1} comprises molecular graphs which consist of bonds and atom types, and no 3D or 2D coordinates. We reuse the experimental setup and architecture from \citep{godwin2022simple}, with only one difference: appending the effective resistance to the edge features. Additionally, we compare against an equivalent model which uses molecular conformations estimated by RDKit as an additional feature. This gives us a baseline which leverages an explicit estimate of the molecular geometry.

\begin{table}[ht]
    \centering
    \scalebox{0.95}{
    \begin{tabular}{ccccc}
    \toprule
         {\bf Model} & {\bf \#Layers} & {\bf Noisy Nodes} & {\bf Random Features} & {\bf Validation MAE}  \\
     \midrule
         MPNN \cite{godwin2022simple} & 16 & Yes & No & 0.1249 $\pm$ 0.0003 \\
         MPNN \cite{godwin2022simple} & 50 & No & No & 0.1236 $\pm$ 0.0001 \\
         Graphormer \cite{Ying2021DoTR} & - & - & - & 0.1234 \\
         MPNN \cite{godwin2022simple} & 50 & Yes & No & 0.1218 $\pm$ 0.0001 \\
         MPNN \cite{godwin2022simple} & 32 & Yes & No & 0.1222 $\pm$ 0.0002 \\
         MPNN \cite{godwin2022simple} & 32 & No & Yes & 0.1237 $\pm$ 0.0003 \\
         MPNN \cite{godwin2022simple} & 32 & Yes & Yes & 0.1216 $\pm$ 0.0003 \\
         MPNN + Conformers \cite{addanki2021large} & 32 & Yes & No & 0.1212 $\pm$ 0.0001 \\ \midrule
         MPNN + ER (ours) & 32 & No & No & 0.1214 $\pm$ 0.0002\\
         MPNN + ER (ours) & 32 & Yes & No & \textbf{0.1197 $\pm$ 0.0002}\\
     \bottomrule
    \end{tabular}
    }
    \caption{Single-model OGBG-PCQM4Mv1 Results}
    \label{tab:pcqm4m}
\end{table}

Our results are summarised in Table \ref{tab:pcqm4m}. We once again see a powerful synergy of effective resistance-endowed GNNs and Noisy Nodes \cite{godwin2022simple}, allowing us to significantly reduce the number of layers (to 32) and outperform the 50-layer MPNN result in \cite{godwin2022simple}. Further, we improve on the single-model performance of both the Graphormer \cite{Ying2021DoTR} (which won the original PCQM4M contest after ensembling), and an equivalent model to ours which uses molecular conformers from RDKit. This illustrates how ER features can be competitive in geometry-relevant tasks even against features that inherently encode an estimate of the molecule's spatial geometry.

Lastly, we remark that, to the best of our knowledge, our result is the \emph{best published single-model result} on the large-scale PCQM4M-v1 benchmark to date, and the only single model result with validation MAE under 0.120. We hope this will inspire future investigation on affinity-related GNNs for molecular tasks, especially in settings where spatial geometry is not reliably available.


\vspace{-10pt}
\section{Conclusions}
In this paper, we proposed a message passing network 
based on random walk based affinity measures. 
We believe that the comprehensive theoretical and practical results presented in our paper have solidified affinity-based computations as a strong component of a graph representation learner's toolbox. Our proposal carefully balances theoretical expressive power, empirical performance, and scalability to large graphs---while most previous proposals struggle in at least one of the above---offering an attractive avenue for future work. Specifically, in future work we would like to see variants of GNN message functions that explicitly make use of affinity-based computations, rather than providing them as additional hints to the model.

\bibliographystyle{plain}  
\bibliography{bibliography}

\newpage
\appendix

\section{Hyperparameters for PNA dataset.}
In this section we provide the hyperparameters used for the different models on the PNA multitask benchmark. We train all models for 2000 steps and with 3 layers. The remaining hyperparameters for hidden size of each layer, learning rate, number of message passing steps (only valid for MPNN models), number of rotation matrices and same example frequency (when relevant) are provided in Table~\ref{tab:pna_hyperparams}.

\begin{table}[hb]
    \centering
    \begin{tabular}{cccccc}
    \toprule
         {\bf Model} & {\bf \#Hidden} & {\bf Learning} & {\bf \#MP} & {\bf \#Rotation} & {\bf \#Same} \\
          & {\bf size} & {\bf rate} & {\bf steps} & {\bf matrices} & {\bf examples} \\
     \midrule
         GAT & 64 & $10^{-4}$ & - & - & - \\
         GCN & 64 & $10^{-4}$ &  - & - & - \\
         DGN & 256 & $10^{-3}$ &  - & - & - \\
         MPNN & 256 & $10^{-3}$ & 2 & - & - \\
         ER GNN & 128 & $10^{-3}$ & 2 & - & - \\
         ER (node) embeddings & 64 & $10^{-3}$ & 1 & - & - \\
         ER (edge) embeddings & 256 & $10^{-3}$ & 2 & - & - \\
         ER (edge) embeddings & 256 & $10^{-3}$ & 2 & - & - \\
         All ER features & 256 & $10^{-4}$ & 2 & 23 & 9 \\
         HT + ER embeddings (rand rot) & 512 & $10^{-4}$ & 2 & 23 & 4 \\
     \bottomrule
    \end{tabular}
    \caption{Training hyperparameters for PNA dataset}
    \label{tab:pna_hyperparams}
\end{table}

\section{Omitted Proofs}
\label{apx:proofs}
\begin{lemma}[Restatement of \Cref{lem:embeddist}]
 For any pair of nodes $u,v$, we have $\|\resembed{u} - \resembed{v}\|_2^2 = \er(u,v)$.
\end{lemma}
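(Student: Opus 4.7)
The plan is to derive the identity by a short linear algebra computation, reducing $\|\resembed{u}-\resembed{v}\|_2^2$ to the quadratic form that defines $\er(u,v)$.

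First I would use linearity to write
\[
 \resembed{u} - \resembed{v} = C^{1/2} B L^{\dagger} (\ind_u - \ind_v).
\]
Taking squared Euclidean norm yields
\[
 \|\resembed{u}-\resembed{v}\|_2^2 = (\ind_u-\ind_v)^T L^{\dagger} B^T C^{1/2} C^{1/2} B L^{\dagger} (\ind_u-\ind_v) = (\ind_u-\ind_v)^T L^{\dagger} (B^T C B) L^{\dagger} (\ind_u-\ind_v).
\]
Then I would invoke the factorization $B^T C B = L$ stated earlier in the paper to collapse the middle to an $L$, giving $(\ind_u-\ind_v)^T L^{\dagger} L L^{\dagger} (\ind_u-\ind_v)$.

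The only subtle step is simplifying $L^{\dagger} L L^{\dagger}$ back to $L^{\dagger}$. Two equivalent ways to justify this: either cite the Moore--Penrose identity $L^{\dagger} L L^{\dagger} = L^{\dagger}$ directly, or observe that $L L^{\dagger}$ is the orthogonal projector onto the range of $L$ (which equals the orthogonal complement of the all-ones vector, under the connectedness assumption adopted in the section), and note that $\ind_u-\ind_v$ is already orthogonal to $\mathbf{1}$, so the projector acts as the identity on $L^{\dagger}(\ind_u-\ind_v)$ viewed in range of $L^{\dagger}$. Either path lands at $(\ind_u-\ind_v)^T L^{\dagger} (\ind_u-\ind_v)$, which is precisely the Laplacian expression for $\er(u,v)$ quoted from \cite{lovasz93}, completing the proof.

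This step with the pseudo-inverse is the only place that could trip up a careless reader, so I would spell it out rather than hide it behind a single identity. Everything else is pure algebraic manipulation with no real obstacle.
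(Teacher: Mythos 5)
Your proposal is correct and follows essentially the same route as the paper's own proof: expand $\resembed{u}-\resembed{v}=C^{1/2}BL^{\dagger}(\ind_u-\ind_v)$, use $B^TCB=L$, collapse $L^{\dagger}LL^{\dagger}$ to $L^{\dagger}$, and invoke the Laplacian formula for $\er(u,v)$. Your explicit justification of the pseudo-inverse step (via the Moore--Penrose identity or the projector argument) is a slightly more careful spelling-out of a simplification the paper performs without comment, but the argument is the same.
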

\begin{proof}
\begin{align*}
    \|\resembed{u} - \resembed{v}\|_2^2 &= \|C^{1/2} B L_G^{-1} (\ind_u-\ind_v)\|_2^2 \\
    &= (\ind_u-\ind_v)^T L^{\dagger} (B^T C B) L^{\dagger} (\ind_u-\ind_v) \\
    &= (\ind_u-\ind_v)^T L^{\dagger} L L^{\dagger} (\ind_u-\ind_v) \\
    &= (\ind_u-\ind_v)^T L^{\dagger} (\ind_u-\ind_v) 
    = \er(u,v). \tag*{\qedhere}
\end{align*}
\end{proof}

\begin{corollary}[Restatement of \Cref{thm:jl-inner-product}]
For any fixed vectors
$\alpha,\beta \in \R^n$,
if we let $X := \sum_i \alpha_i {x}_i$, $\widehat{X} := \sum_i \alpha_i \widehat{x}_i$ and similarly 
$Y := \sum_i \beta_i {x}_i$, 
$\widehat{Y} := \sum_i \beta_i \widehat{x}_i$; then:
\[
\left|
\langle X, Y \rangle 
- \langle \widehat{X}, \widehat{Y} \rangle \right|
\le \frac{\epsilon}{2} \left(\|X\|^2 + \|Y\|^2\right).
\]
\end{corollary}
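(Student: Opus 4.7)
The plan is to reduce this two-argument claim to the single-argument norm-preservation guarantee of \Cref{lem:jl} via the polarization identity $4\langle X, Y\rangle = \|X+Y\|^2 - \|X-Y\|^2$. First I would note that the same identity holds verbatim for the projected pair, $4\langle \widehat{X}, \widehat{Y}\rangle = \|\widehat{X}+\widehat{Y}\|^2 - \|\widehat{X}-\widehat{Y}\|^2$, since it is purely algebraic in Euclidean spaces and does not depend on the underlying dimension.

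The key observation is then that $X \pm Y = \sum_i (\alpha_i \pm \beta_i) x_i$ are themselves fixed linear combinations of the $x_i$'s, because $\alpha$ and $\beta$ are fixed in advance of the random projection $\Pi$. I would therefore apply \Cref{lem:jl} to these two combinations---adding them, if necessary, to the list of $m$ combinations covered by the JL statement---to conclude that with high probability
\[\bigl|\|\widehat{X} \pm \widehat{Y}\|^2 - \|X \pm Y\|^2\bigr| \le \epsilon \|X \pm Y\|^2.\]
Subtracting the two polarization expressions and applying the triangle inequality yields
\[4\bigl|\langle X, Y\rangle - \langle \widehat{X}, \widehat{Y}\rangle\bigr| \le \epsilon\bigl(\|X+Y\|^2 + \|X-Y\|^2\bigr),\]
after which the parallelogram identity $\|X+Y\|^2 + \|X-Y\|^2 = 2(\|X\|^2 + \|Y\|^2)$ collapses the right-hand side to exactly $2\epsilon(\|X\|^2 + \|Y\|^2)$, giving the claim after dividing by $4$.

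There is no substantive obstacle in this argument, but the one small piece of care needed is to use the symmetric polarization $\|X+Y\|^2 - \|X-Y\|^2$ rather than the one-sided variant $\|X-Y\|^2 = \|X\|^2 - 2\langle X, Y\rangle + \|Y\|^2$. The latter route would leave a residual $\epsilon\|X-Y\|^2$ contribution on the right-hand side that cannot in general be absorbed into a multiple of $\|X\|^2 + \|Y\|^2$ with a constant independent of $\alpha$ and $\beta$, so only the parallelogram-based polarization produces the clean bound stated in the corollary.
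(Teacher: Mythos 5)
Your proposal is correct and follows essentially the same route as the paper's proof: both use the polarization identity $\langle X,Y\rangle = \frac14\left(\|X+Y\|^2-\|X-Y\|^2\right)$, apply \Cref{lem:jl} to the fixed combinations $X\pm Y$, and finish with the triangle and parallelogram identities to obtain the bound $\frac{\epsilon}{2}\left(\|X\|^2+\|Y\|^2\right)$. Your closing remark about why the symmetric polarization is the right choice is a sensible observation, but otherwise there is nothing to add.
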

\begin{proof}
Since 
\(
\langle X, Y \rangle 
= 
\frac14 \left (\|X + Y\|^2 - \|X - Y\|^2\right)
\), we can upper-bound $\big|
\langle X, Y \rangle 
- \langle \widehat{X}, \widehat{Y} \rangle 
\big|$ as:
\begin{align*}
\frac14\left( |\|X + Y\|^2 -\|\widehat{X} + \widehat{Y}\|^2 |
+ |\|X - Y\|^2 -\|\widehat{X} - \widehat{Y}\|^2 |\right)
\\ 
\le \frac{\epsilon}{4} \left( \|X + Y\|^2 
+ \|X - Y\|^2  \right) 
= \frac{\epsilon}{2} \left(\|X\|^2 + \|Y\|^2\right).
\tag*{\qedhere}
\end{align*}
\end{proof}

\begin{lemma}[Restatement of \Cref{thm:hit-to-resembed}]
${H}_{u,v} = 2 M 
\langle \resembed{v}-\resembed{u}, \resembed{v} - {\mathbf{p}}\rangle$
where $\mathbf{p} := \sum_u \pi_u \resembed{u}$. 
\end{lemma}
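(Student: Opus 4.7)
The plan is to pass everything through the Laplacian pseudoinverse $L^{\dagger}$ and reduce to a closed-form expression for hitting times. Write $\vec{\pi} \in \R^n$ for the stationary-distribution vector, so that $\mathbf{p} = \sum_w \pi_w \resembed{w} = C^{1/2} B L^{\dagger} \vec{\pi}$. Using $\resembed{w} = C^{1/2} B L^{\dagger} \ind_w$ together with the identity $B^T C B = L$, the target inner product collapses to
\[
\langle \resembed{v} - \resembed{u},\, \resembed{v} - \mathbf{p}\rangle
= (\ind_v - \ind_u)^T L^{\dagger} L L^{\dagger} (\ind_v - \vec{\pi})
= (\ind_v - \ind_u)^T L^{\dagger} (\ind_v - \vec{\pi}),
\]
where the last step uses that both $\ind_v - \ind_u$ and $\ind_v - \vec{\pi}$ are orthogonal to $\mathbf{1}$ (recall $\sum_w \pi_w = 1$), so $L L^{\dagger}$ acts as the identity on them.

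Next, I would establish the matching Laplacian expression for the hitting time, namely $H_{u,v} = 2M\,(\ind_v - \ind_u)^T L^{\dagger} (\ind_v - \vec{\pi})$. To prove it, let $h^{(v)} \in \R^n$ be the vector with $h^{(v)}_u = H_{u,v}$. A standard first-step analysis ($H_{u,v} = 1 + \sum_w (w_{uw}/d_u) H_{w,v}$ for $u \neq v$, with $H_{v,v} = 0$) gives $(L h^{(v)})_u = d_u$ for every $u \neq v$; since every row of $L$ sums to zero, the remaining coordinate is forced to $(L h^{(v)})_v = d_v - 2M$, yielding the matrix equation $L h^{(v)} = 2M(\vec{\pi} - \ind_v)$. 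The right-hand side is orthogonal to $\mathbf{1}$, so applying $L^{\dagger}$ gives $h^{(v)} = 2M L^{\dagger}(\vec{\pi} - \ind_v) + c\,\mathbf{1}$ for some scalar $c$. Imposing $h^{(v)}_v = 0$ pins down $c = -2M\,\ind_v^T L^{\dagger}(\vec{\pi} - \ind_v)$, and evaluating at coordinate $u$ then gives the displayed identity.

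Multiplying the first display by $2M$ and combining with the second yields the lemma. The main obstacle is the derivation of the Laplacian characterization of $H_{u,v}$ in the second step: it is classical (see e.g.\ \cite{lovasz93}) but is not stated anywhere in the main text, so the cleanest presentation is the three-line derivation above. An alternative route is to start from Tetali's three-point identity $H_{u,v} = \tfrac{1}{2}\sum_k d_k\bigl[\er(u,v) + \er(v,k) - \er(u,k)\bigr]$ and substitute $\er(a,b) = \|\resembed{a}-\resembed{b}\|^2$ from \Cref{lem:embeddist}; the $\|\resembed{k}\|^2$ contributions to the two $\vec{\pi}$-averages cancel, and the remaining terms regroup into $2M\langle \resembed{v}-\resembed{u},\, \resembed{v}-\mathbf{p}\rangle$ by bilinearity of the inner product.
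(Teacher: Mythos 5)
Your proof is correct, but your primary route is genuinely different from the paper's. The paper starts from Tetali's identity $H_{u,v} = \tfrac12\bigl[K_{u,v} + \sum_i \pi_i (K_{v,i}-K_{u,i})\bigr]$ (cited from Tetali~'91), converts commute times to effective resistances via $K_{u,v}=2M\,\er(u,v)$, rewrites each resistance as a squared embedding distance using \Cref{lem:embeddist}, and then regroups the expanded squared norms into the inner product $\langle \resembed{v}-\resembed{u},\resembed{v}-\mathbf{p}\rangle$. You instead bypass Tetali's formula: you derive the Laplacian closed form $H_{u,v}=2M(\ind_v-\ind_u)^T L^{\dagger}(\ind_v-\pi)$ from scratch by first-step analysis (obtaining $Lh^{(v)}=2M(\pi-\ind_v)$, inverting on $\mathbf{1}^{\perp}$, and fixing the additive constant with $h^{(v)}_v=0$), and separately collapse the embedding inner product to the same quadratic form via $B^TCB=L$ and the projection property of $LL^{\dagger}$ (equivalently, $L^{\dagger}LL^{\dagger}=L^{\dagger}$, which holds unconditionally). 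Both steps check out, including the sign bookkeeping and the use of $d=2M\pi$; the implicit connectivity assumption matches the paper's standing assumption. What each approach buys: yours is self-contained and works directly at the level of $L^{\dagger}$, never needing \Cref{lem:embeddist} or an external citation, which is arguably a cleaner foundation; the paper's is shorter once Tetali's identity is granted and keeps the argument entirely in the language of commute times and embedding distances, which fits its narrative. Your sketched alternative route (Tetali's three-point identity plus \Cref{lem:embeddist} and bilinearity) is essentially the paper's proof, so you have in effect recovered both.
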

\begin{proof}
Consider the following expression of hitting times
in terms of commute times by~\cite{tetali91}.
\begin{equation}
H_{u,v} 
= \frac12 \left[
K_{u,v} + \sum_i \pi_i \left( K_{v,i} - K_{u,i}  \right)
\right].
\label{eq:h-to-k}
\end{equation}
Dividing both sides of 
\cref{eq:h-to-k} and using the relation
$K_{u,v} = 2 M \er(u,v)$, we see that:
\begin{align}
    \frac{1}{2 M} H_{u,v} 
    = & \frac12 \left[ \er(u,v) + \sum_i \pi_i \left( \er(v, i)
    - \er(u, i) \right) \right] \notag \\
    = & \frac12 \left[ \|\resembed{u} - \resembed{v}\|^2
    + \sum_i \pi_i \left(\|\resembed{v} - \resembed{i}\|^2
    - \|\resembed{u} - \resembed{i}\|^2\right)\right].
    \label{eq:huv-part-1}
\end{align}
Let's focus on the inner summation. After expanding out the squared
norms, we see that:
\begin{align*}
    \sum_i & \pi_i \left(\|\resembed{v} - \resembed{i}\|^2
    - \|\resembed{u} - \resembed{i}\|^2\right) \\
    = & \sum_i \pi_i \left(\|\resembed{v}\|^2 - \|\resembed{u}\|^2\right)
    - 2\sum_i \langle \resembed{v} - \resembed{u}, \resembed{i}\rangle\\
    = & \sum_i \pi_i 
    \left[
    \left(\|\resembed{v}\|^2 - \|\resembed{u}\|^2\right)
    - 2\sum_i \langle \resembed{v} - \resembed{u}, \resembed{i}\rangle
    \right]\\
    = & \left(\|\resembed{v}\|^2 - \|\resembed{u}\|^2\right)
    - 2 \langle \resembed{v} - \resembed{u}, \sum_i \pi_i \resembed{i}\rangle \\
    = & \left(\|\resembed{v}\|^2 - \|\resembed{u}\|^2\right)
    - 2 \langle \resembed{v} - \resembed{u}, \mathbf{p} \rangle.
\end{align*}
Substituting this back into \cref{eq:huv-part-1}, 
we can express 
$\frac{1}{2 M} H_{u,v}$ as:
\begin{align*}
    \frac12 \left( 
    \|\resembed{v}-\resembed{u}\|^2
    + \|\resembed{v}\|^2 - \|\resembed{u}\|^2
    - 2 \langle \resembed{v}-\resembed{u}, \mathbf{p}\rangle
    \right) \\
    = \|\resembed{v}\|^2 - \langle \resembed{u}, \resembed{v}\rangle
    -
    \langle \resembed{v}-\resembed{u}, \mathbf{p}\rangle 
    = \langle \resembed{v} - \resembed{u}, \resembed{v} - \mathbf{p}
    \rangle. \tag*{\qedhere}
\end{align*}
\end{proof}


\section{Comparison: Effective Resistances vs. Shortest Path Distances}
Given that effective resistance (ER) captures times associated with random walks in a graph, it is tempting to ask how effective resistances compare to shortest path distances (SPDs) between nodes in a graph. Indeed, for some simple graphs, e.g., trees, shortest path distances and effective resistances turn out to be identical. However, in general, effective resistances and shortest path distances behave quite differently.

Nevertheless, it is tempting to ask how effective resistance features compare to SPD features in GNNs, especially as there have been a number of recent model architectures that make use of SPD features (e.g., Graphormer~\citep{YingCLZKHSL21}, Position-Aware GNNs~\citep{you2019position}, DE-GNN~\citep{LiWWL20}). 
We first note that the most natural direct comparison of our ER-based MPNNs with SPD-based networks does not quite make sense. The reason is that the analogous comparison would be to determine the effect of replace ERs with SPDs as features in our MPNNs. However, since our networks only use ER features \underline{along edges of the given graph}, the corresponding SPD features would then be trivial (as the SPD between two nodes directly connected by an edge in the graph is 1, resulting in a constant feature on every edge)!

As a result, graph learning architectures that use SPDs typically either (a.) use a densely-connected network (e.g., Graphormer~\citep{YingCLZKHSL21}, which uses a densely-connected attention mechanism) that incurs $O(n^2)$ overhead, or (b.) pick a small set of \emph{anchor nodes} or \emph{landmark nodes} to which SPDs from all other nodes are computed and incorporated as node features (e.g., Position-Aware GNNs~\citep{you2019position}, DE-GNN~\citep{LiWWL20}). We stress that the former approach generally modifies the graph (by connecting all pairs of nodes) and therefore does not fall within the standard MPNN approach, while the latter includes architectures that fall within the MPNN paradigm.

\subsection{Empirical Results}
In an effort to empirically compare the expressivity of ER features with that of SPD features, we once again perform experiments on the PNA dataset, picking the following baselines that make use of SPD features:
\begin{itemize}
    \item The first baseline is roughly an MPNN with \emph{Graphormer-based features}. More precisely, it is a densely-connected MPNN with SPDs \emph{from the original graph} as edge features. In order to retain the structure of the original graph, we also use additional edge features to indicate whether or not an edge in the dense (complete) graph is a true edge of the original graph. We also explore the use of the \emph{centrality encoding} (in-degree and out-degree embeddings) from Graphormer as additional node features.
    \item The second baseline is the Position-Aware GNN (P-GNN), which makes use of ``anchor sets'' of nodes and encodes distances to these nodes.
\end{itemize}
The results of these baselines are shown in \Cref{tab:pna}. In particular, we note that our ER-based MPNNs outperform all aforementioned baselines.
\begin{table}[ht]
    \centering
    \begin{tabular}{cc}
         \toprule
         {\bf Model} & {\bf Average score}\\
         \midrule
         \rowcolor{Yellow} *MPNN + CE & -2.728\\
         \rowcolor{Yellow} *MPNN (dense) + SPD & -2.157\\
         \rowcolor{Yellow} *MPNN (dense) + CE + SPD & -2.107 \\
         \rowcolor{Orange} *P-GNN & -2.650\\
         {\bf MPNN w/ ER (edge) embedding} & {\bf -2.789} \\
         {\bf MPNN w/ all ER features} & {\bf -3.106}\\
         \bottomrule
    \end{tabular}
    \caption{Results on the PNA dataset for MPNNs with Graphormer-based features (yellow) as well as SPD-based P-GNNs (orange). 
    }
    \label{tab:pna}
\end{table}

\subsection{Theory: ER vs. SPD}
In addition to experimental results, we would like to provide some theory for why effective resistances can capture structure in GNNs that SPDs are unable to.

We will call an initialization function $u \mapsto \mathbf{h}_u^{(0)}$ on nodes of a graph \emph{node-based} if it assigns values that are independent of the edges of the graph. Such an initialization is, however, allowed to depend on node identities (e.g., for the single-source shortest path problem from a source $s$, one might find it natural to define $\mathbf{h}_s^{(0)} = 0$ and $\mathbf{h}_u^{(0)} = +\infty$ for all $u\neq s$).



%

Consider the task of computing ``single-source effective resistances,'' i.e., the effective resistance from a particular node to every other node.  We show that a GNN with a limited number of message passing steps cannot possibly learn single-source effective resistances, even to nearby nodes.
\begin{theorem}
 Suppose we fix $k > 0$. Then, given any node-based initialization function $\mathbf{h}_u^{(0)}$,  it is impossible for a GNN  
 to compute single-source effective resistances from a given node $w$ to any nodes within a $k$-hop neighborhood.

 More specifically, for any update rule
 \begin{align}
 \begin{split}
\mathbf{m}_{uv}^{(t+1)} &= \psi_{t+1}\left(\mathbf{h}_u^{(t)}, \mathbf{h}_v^{(t)}, f_e(\mathbf{x}_{uv})\right) \\
     \mathbf{h}_u^{(t+1)} &= \phi_{t+1}\left(\mathbf{h}_u^{(t)}, f\left(\left\{\mathbf{m}_{uv}: v\in\mathcal{N}(u)\right\}\right)\right)
 \end{split}, \label{eq:updaterule}
 \end{align}

 there exists a graph $G=(V,E)$ and $u\in V$ such that after $k$ rounds of message passing, $h_v^{(k)} \neq \er(u,v)$ for some $v\neq u$ within a $k$-hop neighborhood of $u$.
 
 On the other hand, there exists
 an initialization with respect to which $k$ rounds
 of message passing will compute the correct shortest
 path distances to all nodes within $k$-hop neighborhood.
\end{theorem}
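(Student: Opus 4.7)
The plan is to prove the lower bound by an indistinguishability argument: I exhibit two graphs $G_1, G_2$ on the same vertex set for which any $k$-round MPNN with node-based initialization produces the same latent $\mathbf{h}_v^{(k)}$ at a target node $v$, yet $\er(u,v)$ differs between them. Since $\mathbf{h}_v^{(k)}$ is a deterministic function of the GNN's $k$-hop view, it must agree on $G_1$ and $G_2$, so it cannot simultaneously equal both effective resistances.

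For the construction, let $G_1$ be the odd cycle $C_{2k+1}$ on vertices $v_0, v_1, \ldots, v_{2k}$ with unit-weight edges $v_i v_{(i+1) \bmod (2k+1)}$, and let $G_2 := G_1 \setminus \{v_{k+1} v_{k+2}\}$. Pick $u := v_0$ and $v := v_1$, so $v$ lies at distance $1$ from $u$, well within the $k$-hop neighborhood. The workhorse is a locality lemma for the update rule \eqref{eq:updaterule}: under a node-based initialization, $\mathbf{h}_v^{(k)}$ depends only on the initial features $\mathbf{h}_y^{(0)}$ of nodes $y$ with $d(v, y) \leq k$ and on the edges incident to any node at distance at most $k-1$ from $v$. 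In both graphs every vertex lies within distance $k$ of $v_1$, and only $v_{k+1}$ and $v_{k+2}$ sit at distance exactly $k$; since the deleted edge connects precisely those two outermost nodes, the modification is invisible to $v_1$'s $k$-round computation, and a straightforward induction on $t$ shows $\mathbf{h}_{v_1}^{(t)}$ agrees between $G_1$ and $G_2$ for every $t \leq k$. A direct series/parallel computation then yields $\er_{G_1}(v_0, v_1) = \tfrac{2k}{2k+1}$ (the direct edge in parallel with a length-$2k$ path around the cycle), while $\er_{G_2}(v_0, v_1) = 1$ since $G_2$ is a tree (a single path) and effective resistance in a tree equals graph distance. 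These two values differ, so we can pick whichever of $G_1, G_2$ satisfies $\mathbf{h}_{v_1}^{(k)} \neq \er(u, v_1)$ as our adversarial graph.

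For the positive direction, I would exhibit the classical Bellman--Ford propagation as an instance of the MPNN framework: initialize $\mathbf{h}_u^{(0)} := 0$ at the source and $\mathbf{h}_y^{(0)} := +\infty$ otherwise, let $\psi_{t+1}$ propagate each neighbor's latent incremented by $1$, and take both the aggregator $f$ and the update $\phi_{t+1}$ to be $\min$. A one-line induction on $t$ establishes $\mathbf{h}_y^{(t)} = d(u, y)$ whenever $d(u, y) \leq t$, so after $k$ rounds we recover shortest-path distances throughout the $k$-hop ball of $u$.

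The main obstacle is the locality lemma used in the second paragraph: one must argue rigorously that \emph{no} trace of the deleted edge $v_{k+1} v_{k+2}$ affects $v_1$'s round-$k$ computation, not even indirectly through the degrees of or multiset aggregations at nearby nodes. The cleanest way is to track, for each node $y$ at distance $d$ from $v_1$, precisely which latent $\mathbf{h}_y^{(t)}$ is ultimately required in the unrolled computation---namely only $t \leq k - d$---which forces $v_{k+1}$ and $v_{k+2}$ to enter exclusively through $\mathbf{h}_{\cdot}^{(0)}$, bypassing their (differing) neighborhood structure entirely.
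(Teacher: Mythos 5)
Your proof is correct and follows essentially the same route as the paper's: an indistinguishability argument contrasting a cycle with the path obtained by deleting an edge whose endpoints lie at distance exactly $k$ from the relevant node(s), so that under node-based initialization the $k$-round latent cannot change while the effective resistances do, together with a Bellman--Ford-style simulation for the positive direction. The only differences are cosmetic: the paper uses a cycle on $4k+1$ vertices and argues about every node in the $k$-hop ball of the source, whereas you use $C_{2k+1}$ with a single target node $v_1$, and the paper's shortest-path update adds the edge weight $\mathbf{x}_{uv}$ rather than the unit increment $1$.
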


Note that the assumption on the initialization function in the above theorem is reasonable because enabling the use of arbitrary, unrestricted functions would allow for the possibility of precomputing effective resistances in the graph and trivially incorporating them as node features, which would defeat the purpose of computing them using message-passing. 

We now prove the theorem.
\begin{proof}

Consider the following set of graphs, each on $4k+1$ nodes:

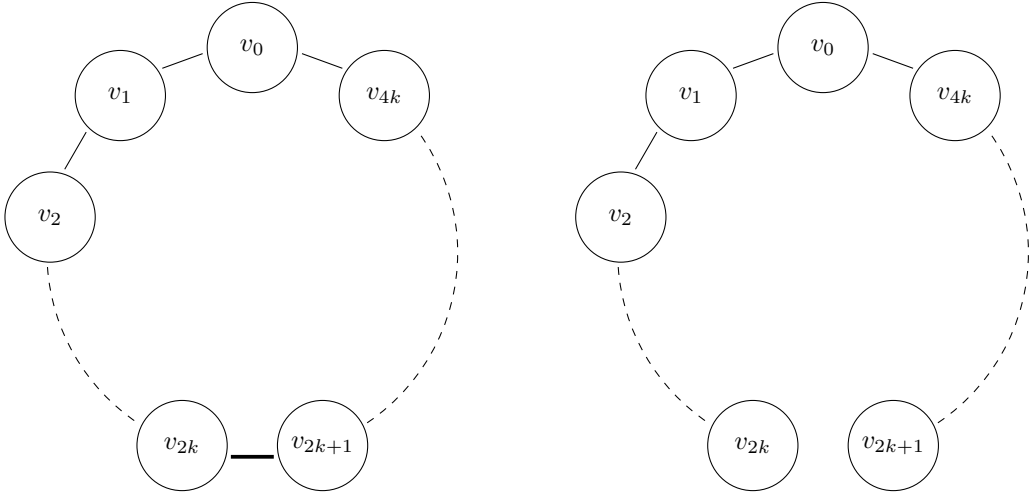
\begin{figure}[h]
\centering
\begin{tikzpicture}[scale=0.65]

\def \n {9}
\def \radius {4.2cm}
\def \margin {14} 

\foreach \s in {0,...,1}
{
  \node[draw, circle, minimum size=1.2cm] at ({90 + 360/\n * (\s)}:\radius) {$v_\s$};
  \draw[-, >=latex] ({90 + 360/\n * (\s)+\margin}:\radius)
    -- ({90 + 360/\n * (\s + 1)-\margin}:\radius);
}

\def \t {2}
\node[draw, circle, minimum size=1.2cm] at ({90 + 360/\n * (\t)}:\radius) {$v_\t$};
\draw[dashed, -, >=latex] ({90 + 360/\n * (\t)+\margin}:\radius)
arc ({90 + 360/\n * (\t)+\margin}:{90 + 360/\n * (\t+2)-\margin}:\radius);

\def \k {4}
\def \kp {5}
\node[draw, circle, minimum size=1.2cm] at ({90 + 360/\n * (\k)}:\radius) {$v_{2k}$};
\draw[-, >=latex, line width=0.5mm] ({90 + 360/\n * (\k)+\margin}:\radius)
-- ({90 + 360/\n * (\kp)-\margin}:\radius);

\node[draw, circle, minimum size=1.2cm] at ({90 + 360/\n * (\kp)}:\radius) {$v_{2k+1}$};
\draw[dashed, -, >=latex] ({90 + 360/\n * (\kp)+\margin}:\radius)
arc ({90 + 360/\n * (\kp)+\margin}:{90 + 360/\n * (\kp+3)-\margin}:\radius);

\node[draw, circle, minimum size=1.2cm] at ({90 + 360/\n * (\kp+3)}:\radius) {$v_{4k}$};
\draw[-, >=latex] ({90 + 360/\n * (\kp+3)+\margin}:\radius)
-- ({90 + 360/\n * (\kp+4)-\margin}:\radius);

\end{tikzpicture} \qquad\qquad
\begin{tikzpicture}[scale=0.65]

\def \n {9}
\def \radius {4.2cm}
\def \margin {14} 

\foreach \s in {0,...,1}
{
  \node[draw, circle, minimum size=1.2cm] at ({90 + 360/\n * (\s)}:\radius) {$v_\s$};
  \draw[-, >=latex] ({90 + 360/\n * (\s)+\margin}:\radius)
    -- ({90 + 360/\n * (\s + 1)-\margin}:\radius);
}

\def \t {2}
\node[draw, circle, minimum size=1.2cm] at ({90 + 360/\n * (\t)}:\radius) {$v_\t$};
\draw[dashed, -, >=latex] ({90 + 360/\n * (\t)+\margin}:\radius)
arc ({90 + 360/\n * (\t)+\margin}:{90 + 360/\n * (\t+2)-\margin}:\radius);

\def \k {4}
\def \kp {5}
\node[draw, circle, minimum size=1.2cm] at ({90 + 360/\n * (\k)}:\radius) {$v_{2k}$};

\node[draw, circle, minimum size=1.2cm] at ({90 + 360/\n * (\kp)}:\radius) {$v_{2k+1}$};
\draw[dashed, -, >=latex] ({90 + 360/\n * (\kp)+\margin}:\radius)
arc ({90 + 360/\n * (\kp)+\margin}:{90 + 360/\n * (\kp+3)-\margin}:\radius);

\node[draw, circle, minimum size=1.2cm] at ({90 + 360/\n * (\kp+3)}:\radius) {$v_{4k}$};
\draw[-, >=latex] ({90 + 360/\n * (\kp+3)+\margin}:\radius)
-- ({90 + 360/\n * (\kp+4)-\margin}:\radius);

\end{tikzpicture}
\caption{Both of the above graphs are on $4k+1$ vertices, labeled $v_0, v_1, \dots, v_{4k}$. The only difference is a single edge, i.e., the graph on the left has an edge between $v_{2k}$ and $v_{2k+1}$, while the one on the right does not have this edge.}
\end{figure}

Let $V = \{v_0, v_1, \dots, v_{4k}\}$. The first graph $G = (V,E)$ is a cycle, while the second graph $G'=(V,E')$ is a path, obtained by removing a single edge from the first graph (namely, the one between $v_k$ and $v_{k+1}$). Suppose the edge weights are all 1 in the above graphs.

Let $w = v_0$ be the source and let $\{\vec{h}_{v}^{(0)}: v \in V\}$ be a ``local'' node feature initialization. Note that for any GNN (i.e., update and aggregation rules in \eqref{eq:updaterule}, add the formal update rule somewhere), the computation tree after $k$ rounds of message passing is identical for nodes $v_0, v_1, \dots, v_k, v_{3k+1}, v_{3k+2}, \dots, v_{4k}$ (i.e., the nodes within the $k$-hop neighborhood of $v_0$) in both $G$ and $G'$. This is because the only difference between $G$ and $G'$ is the existence of the edge between $v_{2k}$ and $v_{2k+1}$, and this edge is beyond a $k$-hop neighborhood centered at any one of the aforementioned nodes. Therefore, we will necessarily have that $\vec{h}_{v_i}^{(k)}$ is identical in both $G$ and $G'$ for $i=1,\dots,k,3k+1, 3k+2, \dots, 4k$.

However, it is easy to calculate the effective resistances in both graphs. In $G$, we have $\er_G(v_0, v_i) = \frac{i(4k+1-i)}{4k+1}$, while in $G'$, we have $\er_{G'}(v_0, v_i) = \min\{i, 4k+1-i\}$. Therefore, $\er_G(v_0, v_i) \neq \er_{G'}(v_0, v_i)$ for all $i=1,2, \dots, k, 3k+1, 3k+2, \dots, 4k$.

It follows that for any $i=1,2,\dots, k, 3k+1, 3k+2, \dots, 4k$, the execution of $k$ message passing steps of a GNN cannot result in $\vec{h}_{v_i}^{(k)} = \er(v_0, v_i)$ for both $G$ and $G'$, which proves the first claim of the theorem.

For the second part (regarding single-source shortest paths), observe that single-source shortest path distances can, indeed, be realized via aggregation and update rules for a message passing network. In particular, for $k$ rounds of message passing, it is possible to learn shortest path distances of all nodes within a $k$-hop neighborhood. Specifically, for a source $w$, we can use the following setup: Take $\mathbf{h}_w = 0$ and $\mathbf{h}_u = \infty$ for all $u\neq w$. Moreover, for any edge $(u,v)$, let the edge feature $\mathbf{x}_{uv} \in \R$ simply be the weight of $(u,v)$ in the graph. Then, take the update rule \eqref{eq:updaterule} with $f_e, \psi_{t+1}$ as identity functions and
\begin{align*}
    f_e(\mathbf{x}_{uv}) &= \mathbf{x}_{uv}\\
    \psi_{t+1}\left(\mathbf{h}_u^{(t)}, \mathbf{h}_v^{(t)}, f_e(\mathbf{x}_{uv})\right) &= \mathbf{h}_u^{(t)} + \mathbf{x}_{uv}\\
    f(S) &= \min_S \{s\in S\}\\
    \phi_{t+1}(a,b) &= \min\{a,b\}.
\end{align*}
It is clear that the above update rule simply simulates the execution of an iteration of the Bellman-Ford algorithm. Therefore, $k$ message passing steps will simulate $k$ iterations of Bellman-Ford, resulting in correct shortest path distances from the source $w$ for every node within a $k$-hop neighborhood.
\end{proof}

\end{document}